\title{Learning Deterministic Weighted Automata \\ with Queries and Counterexamples}
\author{%
  Gail Weiss \\
  Technion\\
%   Haifa, Israel\\
  \texttt{sgailw@cs.technion.ac.il} \\
  \And
  Yoav Goldberg \\
  Bar Ilan University \\
  Allen Institute for AI\\
  \texttt{yogo@cs.biu.ac.il} \\
  \And
  Eran Yahav\\
  Technion \\
%   Haifa, Israel \\
  \texttt{yahave@cs.technion.ac.il} \\
}
\newcommand{\DONE}[1]{}
\newtheorem{theorem}{Theorem}[section]
\newtheorem{lemma}[theorem]{Lemma}
\newtheorem{corollary}[theorem]{Corollary}
\crefname{lemma}{lemma}{lemmas} 
\newcommand{\ignore}[1]{}
\newcounter{programlinenumber}
\newcommand{\TrOnly}[1]{#1}
\newcommand{\SubOnly}[1]{}
\newcommand{\TrOnlyInFootnote}[1]{#1}
\newcommand{\TrOnlyInTable}[1]{#1}}
\newcommand{\TrOnly}[1]{}
\newcommand{\SubOnly}[1]{#1}
\newcommand{\TrOnlyInFootnote}[1]{}
\newcommand{\TrOnlyInTable}[1]{}}
\newcommand{\argmax}{\arg\!\max} % AlfC
\DeclarePairedDelimiter{\ceil}{\lceil}{\rceil} % requires package mathtools
\newcommand{\flexfringe}{\textsc{flexfringe} }
\newcommand{\cexs}[1]{}%{$|ce|: #1$}
\newcommand{\emptystr}{\varepsilon}
\newcommand{\toltype}{variation } 
\newcommand{\toltypeng}{variation}
\newcommand{\uhl}{UHL}
\newcommand{\tomita}{Tomita}
\newcommand{\spice}{SPiCe}%{$\mathcal{S}$}
\newcommand{\lstar}{L$^*$}
\newcommand{\lstarnogap}{{L$^*$}}
\newcommand{\concat}{{\cdot}}
\newcommand{\oracle}{\mathcal{O}}
\newcommand{\stopsym}{\$}
\newcommand{\hatdelta}{\hat{\delta}}
\newcommand{\approxbar}[1]{\approx_{#1}} % requires package amsmath
\newcommand{\napproxbar}[1]{\not\approx_{#1}} % requires package amsmath
\newcommand{\tapprox}{\approxbar{t}}
\newcommand{\ntapprox}{\napproxbar{t}} % requires package amsmath
\newcommand{\hypoaut}{\mathcal{A}}
\newcommand{\hiddentext}[1]{}
\newcommand{\para}[1]{\vspace{3pt}\noindent\textbf{\textit{#1}}}
\begin{document}

\maketitle
\begin{abstract}
We present an algorithm for extraction of a probabilistic \emph{deterministic} finite automaton (PDFA) from a given black-box language model, such as a recurrent neural network (RNN). 
The algorithm is a variant of the exact-learning algorithm \lstarnogap, adapted to 
a probabilistic setting with noise.
The key insight 
is the use of conditional probabilities 
for
observations,
and the introduction of a local
tolerance when comparing them. 
When applied to RNNs, our algorithm often achieves better word error rate (WER) and normalised distributed cumulative gain (NDCG) than that achieved by
spectral extraction of weighted finite automata (WFA) from the same networks. 
PDFAs are substantially more expressive than n-grams, and are guaranteed to be stochastic and deterministic -- unlike spectrally extracted WFAs.
\end{abstract}

\section{Introduction}\label{Se:Intro}

We address the problem of learning a probabilistic deterministic finite automaton (PDFA) from a trained recurrent neural network (RNN)~\cite{Elman}. 
RNNs, and in particular their gated variants GRU~\cite{GRU1,GRU2} and LSTM~\cite{LSTM}, are well known to be very powerful for sequence modelling, but are not interpretable.
PDFAs, which explicitly list their states, transitions, and weights, are more interpretable than RNNs~\cite{PDFAsInterpretable}, while still being analogous to them in behaviour: 
both emit a single next-token distribution from each state, and have deterministic state transitions given a state and token.
They are also much faster to use than RNNs, as their sequence processing does not require matrix operations.

We present an algorithm for reconstructing a PDFA from any given black-box distribution over sequences, such as an RNN trained with a language modelling objective (LM-RNN). The algorithm is applicable for reconstruction of any weighted deterministic finite automaton (WDFA), and is guaranteed to return a PDFA when the target is stochastic -- as an LM-RNN is.

\para{Weighted Finite Automata (WFA)}
A WFA is a weighted \emph{non-deterministic} finite automaton, capable of encoding language models but also other, non-stochastic weighted functions.
Ayache et al.~\cite{WFAfromRNNSpectral} and Okudono et al.~\cite{WFAfromRNNRegression} show how to apply \emph{spectral learning}~\cite{SpectralLearningofWFAs} to an LM-RNN to learn a weighted finite automaton (WFA) approximating its behaviour.

\para{Probabilistic Deterministic Finite Automata (PDFAs)} are a weighted variant of DFAs where each state defines a categorical next-token distribution.
Processing a sequence in a PDFA is simple: input tokens are processed one by one, getting the next state and probability for each token by table lookup.

WFAs are non-deterministic and so not immediately analogous to RNNs.
They are also slower to use than PDFAs, as processing each token in an input sequence requires a matrix multiplication.
Finally, spectral learning algorithms are not guaranteed to return stochastic hypotheses even when the target is stochastic --
though this can remedied by using quadratic weighted automata \cite{QWAs} and normalising their weights.
For these reasons we prefer PDFAs over WFAs for RNN approximation.
Formally:

\para{Problem Definition} Given an LM-RNN $R$, find a PDFA $W$ approximating $R$, such that for any prefix $p$ its next-token distributions in $W$ and in $R$ have low total variation distance 
between them.

Existing works on PDFA reconstruction assume a sample based paradigm: the target cannot be queried explicitly for a sequence's probability or conditional probabilities ~\cite{PDFAClarkThollard2004,PDFALearningALERGIA,PDFALearningBalle2013}. As such, these methods cannot take full advantage of the information available from an LM-RNN\footnote{It
is possible to adapt these methods to an active learning setting, 
in which they may query an oracle for exact probabilities.
However, this raises other questions: on which suffixes are prefixes compared? 
%(If each has its own set, this requires calculation from every other prefix during comparison.) 
How does one pool the probabilities of two prefixes when merging them? 
We leave such an adaptation to future work.}.
Meanwhile, most work on the extraction of finite automata from RNNs has focused on ``binary'' deterministic finite automata (DFAs)~\cite{NNExtraction,NNExtractionFuzzyClustering,NNExtractionGiles2017,OurDFAExtraction,SergioExtraction}, which cannot fully express the behaviour of an LM-RNN.

\para{Our Approach} 
Following the successful application of {\lstar} \cite{Lstar} to RNNs for DFA extraction \cite{OurDFAExtraction}, we develop an adaptation of {\lstar} for the weighted case. 
The adaptation returns a PDFA when applied to a stochastic target such as an LM-RNN. It interacts with an oracle using two types of queries:

\begin{compactenum}
	\item \emph{Membership Queries}: requests to give the target probability of the last token in a sequence.
	\item \emph{Equivalence Queries}: requests to accept or reject a hypothesis PDFA, returning a \emph{counterexample} --- a sequence for which the hypothesis automaton and the target language diverge beyond the tolerance on the next token distribution  --- if rejecting.
\end{compactenum}

The algorithm alternates between filling an \emph{observation table} with observations of the target behaviour, and presenting minimal PDFAs consistent with that table to the oracle for equivalence checking. 
This continues until an automaton is accepted. 
The use of conditional properties 
in the observation table prevents the observations from vanishing to $0$ 
on low probabilities.
To the best of our knowledge, 
this is 
the first work on learning PDFAs from RNNs.

A key insight of our adaptation is the use of an \emph{additive \toltype~tolerance} $t{\in}[0,1]$
when comparing rows in the table. 
In this framework, two probability vectors 
are considered $t$-equal
if their probabilities for each event are within $t$ of each other.
Using this tolerance enables us to 
extract a much smaller PDFA than the original target, while still making locally similar predictions to it on any given 
sequence.
This is necessary because RNN states are real valued vectors, making the potential number of reachable states in an LM-RNN  unbounded.
The tolerance is non-transitive, making construction of PDFAs from the table more challenging than in \lstar. 
Our algorithm suggests a way to address this.

Even with this tolerance, reaching equivalence may take a long time for large target PDFAs, and so we design our algorithm to allow anytime stopping of the extraction. The method allows the extraction to be limited while still maintaining certain guarantees on the reconstructed PDFA.

\emph{Note.} 
While this paper only discusses RNNs, the algorithm itself is actually agnostic to the underlying structure of the target, and can be applied to any language model.
In particular it may be applied to 
transformers \cite{attention2017,BERT2018orig}.
However, in this case the analogy to PDFAs breaks down.

\paragraph{Contributions} The main contributions of this paper are:
\begin{compactenum}
\item An algorithm for reconstructing a WDFA from any given weighted target, and in particular a PDFA if the target is stochastic.
\item A method for anytime extraction termination 
while still maintaining correctness guarantees.
\item An implementation of the algorithm \footnote{Available at \texttt{www.github.com/tech-srl/weighted\textunderscore lstar}} and an evaluation over extraction from LM-RNNs, including a comparison to other LM reconstruction techniques.
\end{compactenum}
\section{Related Work}\label{Se:relatedWork}
In Weiss et al \cite{OurDFAExtraction}, we presented a method for applying Angluin's exact learning algorithm \lstar \cite{Lstar} to RNNs, successfully extracting deterministic finite automata (DFAs) from given binary-classifier RNNs.
This work expands on this by adapting \lstar to extract PDFAs from LM-RNNs. 
To apply exact learning to RNNs, one must implement equivalence queries: requests to accept or reject a hypothesis. Okudono et al. \cite{WFAfromRNNRegression} show how to adapt the equivalence query presented in \cite{OurDFAExtraction} to the weighted case.

There exist many methods for PDFA learning,
originally 
for
acyclic PDFAs ~\cite{PDFALearningAcyclic88,PFALearningAcyclic98,PDFALearningAcyclic99}, and 
later 
for
PDFAs in general ~\cite{PDFAClarkThollard2004, PDFALearningALERGIA,PDFALearningMDI,PDFALearningPACPalmer,PDFALearningPACCastro,PDFALearningBalle2013}.
These methods split and merge states in the learned PDFAs according to sample-based estimations of their conditional distributions.
Unfortunately, they require very large sample sets to succeed (e.g., \cite{PDFAClarkThollard2004} requires \textasciitilde 13m samples for a PDFA with $|Q|,|\Sigma|=2$).

Distributions over $\Sigma^*$ can also be represented by WFAs, though these are non-deterministic. 
These can be learned using \emph{spectral algorithms},
which use SVD decomposition and $|\Sigma|+1$ matrices of observations from the target to build a WFA \cite{SpectralAlgorithmsBailly,SpectralLearningofWFAs,LearningWeightedAutomata,SpectralAlgorithmsHsu}.
Spectral algorithms have recently been applied to RNNs to extract WFAs representing their behaviour ~\cite{WFAfromRNNSpectral,WFAfromRNNRegression,wfas-2rnns}, we compare to \cite{WFAfromRNNSpectral} in this work.
The choice of observations 
used
is also a focus of research in this field \cite{quattonicarrerasspectralbasis}.

For more on language modelling, see the reviews of Goodman~\cite{LMReviewProgress} or Rosenfeld~\cite{LMReviewTwoDecades},
or the Sequence Prediction Challenge (SPiCe)~\cite{SPiCe} and  Probabilistic Automaton Challenge (PAutomaC)~\cite{PAutomaC}.

\section{Background}\label{Se:Background}

\para{Sequences and Notations}
For a finite alphabet $\Sigma$, the set of finite sequences over $\Sigma$ is denoted by $\Sigma^*$, and the empty sequence by $\varepsilon$. 
For any $\Sigma$ and stopping symbol $\stopsym\notin\Sigma$, we denote $\Sigma_\stopsym\triangleq\Sigma\cup\{\stopsym\}$, and  $\Sigma^{+\$}\triangleq\Sigma^*\concat\Sigma_\$$  -- the set of $s\in\Sigma_\$\setminus\{\varepsilon\}$ where the stopping symbol may only appear at the end.

For a sequence $w\in\Sigma^*$, its length is denoted $|w|$, its concatenation after another sequence $u$ is denoted $u\concat w$, its $i$-th element is denoted $w_{i}$, and its prefix of length $k\leq |w|$ is denoted $w_{:k}=w_1\concat ...\concat w_k$.
We use the shorthand $w_{-1}\triangleq w_{|w|}$ 
and $w_{:-1}\triangleq w_{:|w|-1}$.
A set of sequences $S\subseteq\Sigma^*$ is said to be \emph{prefix closed} if for every $w\in S$ and $k\leq |w|$, $w^k\in S$. \emph{Suffix closedness} is defined analogously. 

For any finite alphabet $\Sigma$ and set of sequences $S\subseteq \Sigma^*$, we assume some internal ordering of the set's elements $s_1,s_2,...$ to allow discussion of vectors of observations over those elements.

\para{Probabilistic Deterministic Finite Automata (PDFAs)}
are tuples $A=\langle Q,\Sigma,\delta_Q,q^i,\delta_W\rangle$ such that 
$Q$ is a finite set of states, 
$q^i\in Q$ is the initial state, 
$\Sigma$ is the finite input alphabet, $\delta_Q:Q\times\Sigma\rightarrow Q$ is the transition function and 
$\delta_W:Q\times\Sigma_\$\rightarrow [0,1]$ is the transition weight function, 
satisfying $\sum_{\sigma\in\Sigma_\$}\delta_W(q,\sigma)=1$ for every $q\in Q$.

The recurrent application of $\delta_Q$ to a sequence is denoted by $\hatdelta:Q\times\Sigma^*\rightarrow Q$, and defined: 
$\hatdelta(q,\varepsilon)\triangleq q$ and $\hatdelta(q,w\concat a)\triangleq \delta_Q(\hatdelta(q,w),a)$
for every $q\in Q,a\in\Sigma$, $w\in\Sigma^*$.
We abuse notation to denote: $\hatdelta(w)\triangleq\hatdelta(q^i,w)$ for every $w\in\Sigma^*$. 
If for every $q\in Q$ there exists a series of non-zero transitions reaching a state $q$ with $\delta_W(q,\$)>0$, then $A$ defines a distribution $P_A$ over $\Sigma^*$ as follows: 
for every $w\in\Sigma^*$, $P_A(w)=\delta_W(\hatdelta(w),\$)\cdot\prod_{i\leq |w|} \delta_W(\hatdelta (w_{:i-1}),w_i)$.

\para{Language Models (LMs)} Given a finite alphabet $\Sigma$, a \emph{language model} $M$ over $\Sigma$ is a model defining a distribution $P_M$ over $\Sigma^*$.  
For any $w\in\Sigma^*,S\subset\Sigma^{+\$}$, and $\sigma\in\Sigma$, $P=P_M$ induces the following:
\begin{compactitem}
\item \emph{Prefix Probability:} $P^p(w)\triangleq\sum_{v\in\Sigma^*}P(w\concat v)$. 
\item \emph{Last Token Probability:}
if $P^p(w)>0$, then $P^l(w\concat\sigma)\triangleq\frac{P^p(w\concat\sigma)}{P^p(w)}$ and $P^l(w\concat\$)\triangleq\frac{P(w)}{P^p(w)}$.
\item \emph{Last Token Probabilities Vector:}
if $P^p(w)>0$, $P^l_S(w)\triangleq(P^l(w\concat s_1),...,P^l(w\concat s_{|S|}))$.
\item \emph{Next Token Distribution:} $P^n(w):\Sigma_\$\rightarrow [0,1]$, defined: $P^n(w)(\sigma)=P^l(w\concat\sigma)$.
\end{compactitem}

\para{Variation Tolerance} Given two categorical distributions $\mathbf{p}$ and $\mathbf{q}$, their total variation distance is defined $\delta(\mathbf{p},\mathbf{q}) \triangleq \|\mathbf{p}-\mathbf{q}\|_\infty$, i.e., the largest difference in probabilities that they assign to the same event. 
Our algorithm tolerates some variation distance between next-token probabilities, as follows:

Two event probabilities $p_1,p_2$ are called \emph{$t$-equal} and denoted $p_1\tapprox p_2$ if $|p_1-p_2|\leq t$.
Similarly, two vectors of probabilities 
$\mathbf{v_1},\mathbf{v_2}\in [0,1]^n$ are called \emph{$t$-equal} and denoted $\mathbf{v_1}\tapprox \mathbf{v_2}$ if $\|\mathbf{v_1}-\mathbf{v_2}\|_\infty\leq t$, i.e. if $\underset{i\in[n]}{\max} (|\mathbf{v_{1_i}}-\mathbf{v_{2_i}}|)\leq t$.
For any distribution $P$ over $\Sigma^*$, $S\subset \Sigma^{+\$}$, and $p_1,p_2\in\Sigma^*$, we denote $p_1\approxbar{(P,S,t)} p_2$ if $P^l_{S}(p_1)\tapprox P^l_{S}(p_2)$, or simply $p_1\approxbar{(S,t)} p_2$ if $P$ is clear from context.
For any two language models $A,B$ over $\Sigma^*$ and $w\in\Sigma^{+\$}$, we say that $A,B$ are \emph{$t$-consistent on $w$} if $P_A^l(u)\tapprox P_B^l(u)$ for every prefix $u\neq\varepsilon$ of $w$. We call $t$ the \emph{\toltype tolerance}.

\para{Oracles and Observation Tables}
Given an oracle $\oracle$,
an observation table for $\oracle$ is a sequence indexed matrix $O_{P,S}$ of observations taken from it, 
with the rows indexed by prefixes $P$ and the columns by suffixes $S$.  
The observations are $O_{P,S}(p,s)=\oracle(p\concat s)$ for every $p\in P$, $s\in S$.
For any $p\in\Sigma^*$ we denote $\oracle_S(p)\triangleq (\oracle(p\concat s_1),...,\oracle(p\concat s_2))$,
and for every $p\in P$ the $p$-th row in $O_{P,S}$ is denoted $O_{P,S}(p)\triangleq \oracle_S(p)$. 
In this work we use an oracle for the last-token probabilities of the target, $\oracle (w)=P^l(w)$ for every $w\in\Sigma^{+\$}$, and maintain $S\subseteq \Sigma^{+\$}$.

\para{Recurrent Neural Networks (RNNs)}
An RNN is a recursive parametrised function $h_t = f(x_t,h_{t-1})$ with initial state $h_0$, such that $h_t\in \mathbb{R}^n$ is the state after time $t$ and $x_t\in X$ is the input at time $t$.
A language model RNN (LM-RNN) over an alphabet $X=\Sigma$ is an RNN coupled with a prediction function $g:h\mapsto d$, where $d\in [0,1]^{|\Sigma_\$|}$ is a vector representation of a next-token distribution.
RNNs differ from PDFAs only in that their number of reachable states (and so number of different next-token distributions for sequences) may be unbounded.

\section{Learning PDFAs with Queries and Counterexamples} \label{OurAlg}
In this section we describe the details of our algorithm.
We explain why a direct application of {\lstar} to PDFAs will not work, and then present our non-trivial adaptation.
Our adaptation does not rely on the target being stochastic, and can in fact be applied to reconstruct any WDFA from an oracle.

\para{Direct application of {\lstar} does not work for LM-RNNs:} {\lstar} is a polynomial-time algorithm for learning a deterministic finite automaton (DFA) from an oracle.
It can be adapted to work with oracles giving any finite number of classifications to sequences, and can be naively adapted to a probabilistic target $P$ with finite possible next-token distributions $\{P^n(w)|w\in\Sigma^*\}$ by treating each next-token distribution as a sequence classification.
However, \emph{this will not work for reconstruction from RNNs}. This is because the set of reachable states in a given RNN is unbounded, and so also the set of next-token distributions. Thus, in order to practically adapt {\lstar} to extract PDFAs from LM-RNNs, we must reduce the number of classes {\lstar} deals with. 

\para{Variation Tolerance}
Our algorithm reduces the number of classes it considers by allowing an additive {\toltype} tolerance $t\in [0,1]$, and considering $t$-equality (as presented in Section \ref{Se:Background}) as opposed to actual equality when comparing probabilities. 
In introducing this tolerance we must handle the fact that it may be non-transitive: 
there may exist $a,b,c\in [0,1]$ such that $a\tapprox b, b\tapprox c$, but $a \ntapprox c$.
\footnote{We could define a {\toltype} tolerance by quantisation of the distribution space, which would be transitive. However this may be unnecessarily aggressive at the edges of the intervals.}

To avoid potentially grouping together all 
predictions on 
long sequences, which are likely to have very low probabilities, our algorithm observes only local probabilities.
In particular, the algorithm uses an oracle that gives the last-token probability for every non-empty input sequence. 

\subsection{The Algorithm}\label{OurAlg:cexdef}

The algorithm loops over three main steps:
\begin{inparaenum}[(1)]
\item expanding an observation table $O_{P,S}$ until it is closed and consistent, 
\item constructing a hypothesis automaton, and 
\item making an equivalence query about the hypothesis. 
\end{inparaenum}
The loop repeats as long as the oracle returns counterexamples for the hypotheses. 
In our setting, 
counterexamples are sequences $w\in\Sigma^*$ after which the hypothesis and the target have next-token distributions that are not $t$-equal.
They are handled by adding all of their prefixes to $P$. 

Our algorithm expects last token probabilities from the oracle, i.e.: $\oracle(w)=P^l_T(w)$ where $P_T$ is the target distribution. 
The oracle is not queried on $P^l_T(\varepsilon)$, which is undefined. 
To observe the entirety of every prefix's next-token distribution, $O_{P,S}$ is initiated with $P=\{\varepsilon\},S=\Sigma_\$$.

\paragraph{Step 1: Expanding the observation table}\label{OurAlg:obs}
$O_{P,S}$ is expanded as in \lstar~\cite{Lstar}, but with the definition of row equality relaxed. Precisely, it is expanded until:
\begin{compactenum}
\item \emph{Closedness} For every $p_1\in P$ and $\sigma\in\Sigma$, there exists some $p_2\in P$ such that $p_1\concat\sigma\approxbar{S,t}p_2$.
\item \emph{Consistency} For every $p_1,p_2\in P$ such that $p_1\approxbar{S,t}p_2$, for every $\sigma\in\Sigma$, $p_1\concat\sigma\approxbar{S,t}p_2\concat\sigma$.
\end{compactenum}

The table expansion is managed by a queue $L$ initiated to $P$, from which prefixes $p$ are processed one at a time as follows:
If $p\notin P$, and there is no $p'\in P$ s.t. $p\approxbar{(t,S)}p'$, then $p$ is added to $P$. If $p\in P$ already, then it is checked for inconsistency,
i.e. whether there exist $p',\sigma$ s.t. $p\approxbar{(t,S)}p'$ but $p\concat\sigma\napproxbar{(t,S)}p'\concat\sigma$. 
In this case
a \emph{separating suffix} $\tilde{s}$, $P^l_T(p\concat\sigma\concat\tilde{s})\ntapprox P^l_T(p'\concat\sigma\concat\tilde{s})$
is added to $S$,
such that now $p\napproxbar{t,S}p'$, and the expansion restarts. 
Finally, if $p\in P$ then $L$ is updated with $p\concat\Sigma$.

As in \lstar, checking closedness and consistency can be done in arbitrary order.
However, if the algorithm may be terminated before $O_{P,S}$ is closed and consistent, it is better to process
$L$
in order of prefix probability (see section \ref{Stoppe}).

\paragraph{Step 2: PDFA construction}\label{OurAlg:cons}
Intuitively, we would like to group equivalent rows of the observation table to form the states of the PDFA, and map transitions between these groups according to the table's observations. The challenge in the \toltypeng-tolerating setting is that \emph{$t$-equality is not transitive}. 

Formally, let $C$ be a partitioning (\emph{clustering}) of $P$, and for each $p\in P$ let $c(p)\in C$ be the partition (\emph{cluster}) containing $p$. $C$ should satisfy:
\begin{compactenum}
\item \emph{Determinism} 
For every $c\in C$, $p_1,p_2\in c$, $\sigma\in\Sigma$: $p_1\concat\sigma,p_2\concat\sigma\in P \implies c(p_1\concat\sigma)=c(p_2\concat\sigma)$.
\item \emph{$t$-equality (Cliques)} For every $c\in C$ and $p_1,p_2\in c$, $p_1  \approxbar{(t,S)} p_2$.
\end{compactenum}

For $c\in C$, $\sigma \in \Sigma$, we denote $C_{c,\sigma}=\{c(p\concat \sigma) | p\in c, p\concat\sigma\in P \}$ the next-clusters reached from $c$ with $\sigma$, and $k_{c,\sigma}\triangleq|C_{c,\sigma} |$. Note that $C$ satisfies determinism iff $k_{c,\sigma}\leq1$ for every $c\in C,\sigma\in\Sigma$. Note also that the constraints are always satisfiable by the clustering $C=\{\{p\}\}_{p\in P}$

We present a 4-step algorithm to solve these constraints while trying to avoid excessive partitions:
\footnote{We describe our implementation of these stages in appendix \ref{App:Implementation}.}

\begin{compactenum}
\item \emph{Initialisation}: The prefixes $p\in P$ are partitioned into some initial clustering $C$ according to the $t$-equality of their rows, $O_S(p)$. 
\item \emph{Determinism I}: 
$C$ is refined until it satisfies determinism:
clusters $c\in C$ with tokens $\sigma$ for which $k_{c,\sigma}>1$ are split by next-cluster equivalence into $k_{c,\sigma}$ new clusters. 
\item \emph{Cliques}: Each cluster is refined into cliques (with respect to $t$-equality). 
\item \emph{Determinism II}: $C$ is again refined until it satisfies determinism, as in (2).
\end{compactenum}

Note that refining a partitioning into cliques may break determinism, but refining into a deterministic partitioning will not break cliques. In addition, when only allowed to refine clusters (and not merge them), all determinism refinements are necessary. Hence the order of the last 3 stages. 

Once the clustering $C$ is found, a PDFA $\mathcal{A}=\langle C,\Sigma,\delta_Q,c(\varepsilon),\delta_W \rangle$ is constructed from it.
Where possible, $\delta_Q$ is defined directly by $C$: for every $p\concat\sigma\in P$, $\delta_Q(c(p),\sigma)\triangleq c(p\concat\sigma)$. 
For $c,\sigma$ for which $k_{c,\sigma}=0$, $\delta_Q(c,\sigma)$ is set as the best cluster match for $p\concat\sigma$, where $p=\argmax_{p\in c} P_T^p(p)$. This is chosen according to the heuristics presented in Section ~\ref{force-heuristics}.
The weights $\delta_W$ are defined as follows: for every $c\in C, \sigma\in\Sigma_\$$, 
$\delta_W(c,\sigma)\triangleq 
\frac
{\sum_{p\in c} P_T^p(p)\cdot P_T^l(p\concat\sigma)}
{\sum_{p\in c} P_T^p(p)}$.

\paragraph{Step 3: Answering Equivalence Queries} 
We sample the target LM-RNN and hypothesis PDFA $\mathcal{A}$ a finite
number of times, testing every prefix of each sample to see if it is a counterexample.
If none is found, we accept $\mathcal{A}$.
Though simple, we find this method to be sufficiently effective in practice.
A more sophisticated approach is 
presented in 
\cite{WFAfromRNNRegression}.

\subsection{Practical Considerations} \label{Practical}

We present some methods and heuristics that allow a more effective application of the algorithm to large (with respect to $|\Sigma|$, $|Q|$) or poorly learned grammars. 

\para{Anytime Stopping} \label{Stoppe}
In case the algorithm runs for too long, 
we allow termination before $O_{P,S}$ is closed and consistent, 
which may be imposed by size or time limits on the table expansion. 
If $|S|$ reaches its limit, the table expansion continues but stops checking consistency.
If the time or $|P|$ limits are reached, 
the algorithm stops, constructing and accepting a PDFA from the table as is. 
The construction is unchanged up to the fact that some of the transitions may not have a defined destination, for these we use a ``best cluster match'' as described in section \ref{force-heuristics}.
This does not harm the guarantees on $t$-consistency between $O_{P,S}$ and the returned PDFA discussed in Section \ref{Se:Guarantees}.

\para{Order of Expansion} As some prefixes will not be added to $P$ under anytime stopping, the order in which rows are checked for closedness and consistency matters. 
We sort $L$ by prefix weight.
Moreover, if a prefix $p_1$ being considered is found inconsistent w.r.t. some $p_2\in P,\sigma\in\Sigma_\$$, then all such pairs $p_2,\sigma$ are considered and the separating suffix $\tilde{s}\in\sigma\concat S$, $\oracle(p_1\concat\tilde{s})\napproxbar{t}\oracle(p_2\concat\tilde{s})$ with the highest minimum conditional probability ${\max}_{p_2}{\min}_{i=1,2}\frac{P^p_T(p_i\concat\tilde{s})}{P^p_T(p_i})$  is added to $S$.

\paragraph{Best Cluster Match}\label{force-heuristics}
Given a prefix $p\notin P$ and set of clusters $C$, we seek a best fit $c\in C$ for $p$. First we filter $C$ for the following qualities until one is non-empty, in order of preference:
\begin{inparaenum}[(1)]
\item $c'=c\cup\{p\}$ is a clique w.r.t. $t$-equality.
\item There exists some $p'\in c$ such that $p'\approxbar{(t,S)} p$, and $c$ is not a clique.
\item There exists some $p'\in c$ such that $p' \approxbar{(t,S)} p$.
\end{inparaenum}
If no clusters satisfy these qualities, we remain with $C$. 
From the resulting group $C'$ of potential matches, the best match could be the cluster $c$ minimising $||O_S(p')-O_S(p)||_\infty$, $p'\in c$. 
In practice, we choose from $C'$ arbitrarily for efficiency.

\paragraph{Suffix and Prefix Thresholds}\label{suffprefthresh}
Occasionally when checking the consistency of two rows $p_1\tapprox p_2$, a separating suffix $\sigma\concat s\in \Sigma\concat S$ will be found that is actually very unlikely to be seen after $p_1$ or $p_2$. 
In this case it is unproductive to add $\sigma\concat s$ to $S$. 
Moreover -- especially as RNNs are unlikely to perfectly learn a probability of $0$ for some event -- it is possible that going through $\sigma\concat s$ will reach a large number of `junk' states. 
Similarly when considering a prefix $p$, if $P^l_T(p)$ is very low then it is possible that it is the failed encoding of probability $0$, and that all states reachable through $p$ are not useful.

We introduce thresholds $\varepsilon_S$ and $\varepsilon_P$ for both suffixes and prefixes. When a potential separating suffix $\tilde{s}$ is found from prefixes $p_1$ and $p_2$, it is added to $S$ only if
${\min}_{i=1,2}\nicefrac{P^p(p_i\concat \tilde{s})}{P^p(p_i)}\geq \varepsilon_S$. 
Similarly, potential new rows $p\notin P$ are only added to $P$ if $P^l(p)\geq \varepsilon_P$.

\paragraph{Finding Close Rows}\label{kdtree}
We maintain $P$ in a KD-tree $T$ indexed by row entries $O_{P,S}(p)$, with one level for every column $s\in S$.  
When considering
of a prefix $p\concat\sigma$, we use $T$ to get the subset of all potentially $t$-equal prefixes. 
$T$'s levels are split into equal-length intervals, we find $2t$ to work well.

\paragraph{Choosing the Variation Tolerance}\label{setting_tol}
In our initial experiments (on SPiCes  0-3), we used $t=\nicefrac{1}{|\Sigma|}$. The intuition was that given no data, the fairest distribution over $|\Sigma|$ is the uniform distribution, and so this may also be a reasonable threshold for a significant difference between two probabilities. In practice, we found that $t=0.1$ often strongly differentiates states even in models with larger alphabets -- except for SPiCe 1, where $t=0.1$ quickly accepted a model of size 1. A reasonable strategy for choosing $t$ is to begin with a large one, and reduce it if equivalence is reached too quickly. 
\section{Guarantees}\label{Se:Guarantees}
We note some guarantees on the extracted model's qualities and relation to its target model. 
\emph{Formal statements and full proofs for each of the guarantees listed here are given in appendix \ref{App:Guarantees}.}

\paragraph{Model Qualities}
The model is guaranteed to be deterministic by construction. Moreover, if the target is stochastic, then the returned model is guaranteed to be stochastic as well.

\paragraph{Reaching Equivalence}
If the algorithm terminates successfully (i.e., having passed an equivalence query), then the returned model is $t$-consistent with the target on every sequence $w\in\Sigma^*$, 
by definition of the query.
In practice we have no true oracle and only approximate equivalence queries by sampling the models, and so can only attain a probable guarantee of their relative $t$-consistency.

\paragraph{$t$-Consistency and Progress}
No matter when the algorithm is stopped,
the returned model is always $t$-consistent with its target on every $p\in P\concat\Sigma_\$$, where $P$ is the set of prefixes in the table $O_{P,S}$. 
Moreover, as long as the algorithm is running, the prefix set $P$
is always increased within a finite number of operations.
This means that the algorithm maintains a growing set of prefixes on which any PDFA it returns is guaranteed to be $t$-consistent with the target. 
In particular, this means that if
equivalence is not reached, at least
\emph{the algorithm's model of the target improves for as long~as~it~runs}.
\section{Experimental Evaluation}\label{Se:Results}

We apply our algorithm to 2-layer LSTMs trained on grammars from the SPiCe competition~\cite{SPiCe},
adaptations of the Tomita grammars~\cite{tomita82} to PDFAs,
and small PDFAs representing languages with unbounded history.
The LSTMs have input dimensions $2$-$60$ and hidden dimensions $20$-$100$. The LSTMs and their  training methods are fully described in Appendix ~\ref{App:RNNs}.

\para{Compared Methods}
We compare our algorithm to the sample-based method ALERGIA~\cite{PDFALearningALERGIA}, 
the spectral algorithm used in ~\cite{WFAfromRNNSpectral},
and $n$-grams. 
An $n$-gram is a PDFA whose states are a sliding window of length $n-1$ over the input sequence,  
with transition function $\sigma_1\concat...\concat\sigma_n,\sigma\mapsto\sigma_2\concat...\sigma_n\concat\sigma$. 
The probability of a token $\sigma$ from state $s\in\Sigma^{n-1}$ is the MLE estimate $\frac{N(s\concat\sigma)}{N(s)}$, where $N(w)$ is the number of times the sequence $w$ appears as a subsequence in the samples.
For ALERGIA, we use the PDFA/DFA inference toolkit \flexfringe ~\cite{flexfringe}.

\para{Target Languages}
We train $10$ RNNs on a subset of the SPiCe grammars, covering languages generated by HMMs, and languages from the NLP, \emph{software}, and \emph{biology} domains. 
We train $7$ RNNs on PDFA adaptations of the $7$ Tomita languages~\cite{tomita82},
made from the minimal DFA for each language by giving each of its states a next-token distribution as a function of whether it is accepting or not.
We give a full description of the Tomita adaptations and extraction results in appendix \ref{App:smalls}.
As we show in (\ref{subse:results}), the $n$-gram models prove to be very strong competitors on the {\spice} languages. To this end, we consider three additional languages that need to track information for an unbounded history, and thus cannot be captured by \emph{any} $n$-gram model. We call these UHLs (unbounded history languages).

\uhl s 1 and 2 are PDFAs that cycle through 9 and 5 states with different next token probabilities.
{\uhl} 3 is a weighted adaptation of the 5\textsuperscript{th} Tomita grammar, changing its next-token distribution according to the parity of the seen \texttt{0}s and \texttt{1}s.
The \uhl s are drawn in appendix \ref{App:smalls}.

\para{Extraction Parameters}
Most of the extraction parameters differ between the RNNs, and are described in the results tables (\ref{tab:spice}, \ref{tab:uhl}).
For our algorithm, we always limited the equivalence query to $500$ samples.
For the spectral algorithm, 
we made WFAs for all ranks $k\in[50], k=50m, m\in [10]$, $k=100m, m\in[10]$, and $k=rank(H)$.
For the $n$-grams we used all $n\in[6]$.
For these two, we always show the best results for NDCG and WER.
For ALERGIA in the \flexfringe toolkit, we use the parameters
\texttt{symbol\textunderscore count=50} and \texttt{state\textunderscore count=N}, with \texttt{N} given in the tables.

\para{Evaluation Measures} We evaluate the extracted models against their target RNNs on word error rate (WER) and on normalised discounted cumulative gain (NDCG), which was the scoring function for the SPiCe challenge. In particular the SPiCe challenge evaluated models on $NDCG_5$, and we evaluate the models extracted from the SPiCe RNNs on this as well. 
For the \uhl s, we use $NDCG_2$ as they have smaller alphabets.
We do not use probabilistic measures such as perplexity, as the spectral algorithm is not guaranteed to return probabilistic automata. 

\begin{compactenum}
\item \emph{Word error rate (WER)}: The WER of model A against B on a set of predictions is the fraction of next-token predictions (most likely next token) that are different in A and B.
\item \emph{Normalised discounted cumulative gain (NDCG)}: 
The NDCG of A against B on a set of sequences $\{w\}$ scores A's ranking of the top $k$ most likely tokens after each sequence $w$, $a_1,...,a_k$, in comparison to the actual most likely tokens given by B, $b_1,...,b_k$.
Formally: $$NDCG_k(a_1,...,a_k)= \nicefrac{\sum_{n\in [k]}\frac{P^l_B(w\concat a_n)}{\log_2(n+1)}}{\sum_{n\in [k]}\frac{P^l_B(w\concat b_n)}{\log_2(n+1)}}$$ 

\end{compactenum}
For NDCG we sample the RNN repeatedly, taking all the prefixes of each sample until we have $2000$
prefixes.
We then compute the NDCG for each prefix and  take the average. For WER, we take $2000$ full samples from the RNN, and return the fraction of errors over all of the next-token predictions in those samples.
An ideal WER and NDCG is $0$ and $1$, we note this with $\downarrow,\uparrow$ in the tables.

\subsection{Results and Discussion}\label{subse:results}

Tables \ref{tab:spice} and \ref{tab:uhl} show the results of extraction from the SPiCe and {\uhl} RNNs, respectively. In them, we list our algorithm as W\lstar (Weighted \lstar).
For the WFAs and $n$-grams, which are generated with several values of $k$ (rank) and $n$, we show the best scores for each metric. We list the size of the best model for each metric.
We do not report the extraction times separately, as they are very similar: the majority of time in these algorithms is spent generating the samples or Hankel matrices.

For PDFAs and WFAs the size columns present the number of states, for the WFAs this is equal to the rank $k$ with which they were reconstructed. For $n$-grams the size is the number of table entries in the model, and the chosen value of $n$ is listed in brackets. 
In the {\spice} languages, our algorithm did not reach equivalence, and used between 1 and 6 counterexamples for every language before being stopped --
with the exception of \spice 1 with $t=0.1$, which reached equivalence on a single state.
The \uhl s and \tomita s used 0-2 counterexamples each before reaching equivalence. 

The SPiCe results show a strong advantage to our algorithm in most of the small synthetic languages (1-3), with the spectral extraction taking a slight lead on SPiCe 0. 
However, in the remaining SPiCe languages, the $n$-gram strongly outperforms all other methods. 
Nevertheless, $n$-gram models are inherently restricted to languages that can be captured with
bounded histories, and the \uhl s demonstrate cases where this property does not hold. Indeed, all the algorithms outperform the $n$-grams
on these languages (Table \ref{tab:uhl}).

%checking ndcg with params: num samples: 2000, max sample len: 200, ndcg k: 5
%checking wer with params: num samples: 2000, sample cutoff len: 100
\begin{table}
	\centering
\small
\begin{tabular}{c l|| r r r c c}
	Language ($|\Sigma|,\ell$)& Model & WER
	$\downarrow$ & NDCG$\uparrow$ & Time (h) & WER Size & NDCG Size \\
	\hline

\hline
{\spice} 0 ($ 4$, $ 1.15 $) & W\lstar & 0.084 & 0.987 & 0.3 &  4988 \cexs{4} & 4988 \cexs{4}  \\
 & Spectral & \textbf{0.053} & \textbf{0.996} & 0.3 & k=150 & k=200 \\
 & N-Gram & 0.096 & 0.991 & 0.8 &  1118 (n=6) & 1118 (n=6)  \\
 & ALERGIA\ignore{ nsamples/heuristic/statecount, wer: (5000000, 'alergia', 5000) ndcg: (5000000, 'alergia', 5000) }  & 0.353 & 0.961 & 2.9 &  66 & 66  \\
\hline
{\spice} 1 ($ 20$, $ 2.77 $) & W\lstar{\boldmath$\dagger$} & \textbf{0.093} & \textbf{0.971} & 0.4 &  152 \cexs{6} & 152 \cexs{6}  \\
 & W\lstar & 0.376 & 0.891 & 0.1 &  1 \cexs{0} & 1 \cexs{0}  \\
 & Spectral & 0.319 & 0.909 & 2.9 & k=12 & k=11 \\
 & N-Gram & 0.337 & 0.897 & 0.8 & 8421 (n=4) & 421 (n=3) \\
 & ALERGIA\ignore{ nsamples/heuristic/statecount, wer: (5000000, 'alergia', 5000) ndcg: (5000000, 'alergia', 5000) }  & 0.376 & 0.892 & 1.2 &  7 & 7  \\
\hline
{\spice} 2 ($ 10$, $ 2.13 $) & W\lstar{\boldmath$\ddagger$} & \textbf{0.08} & \textbf{0.972} & 0.8 &  962 \cexs{7} & 962 \cexs{7}  \\
 & Spectral & 0.263 & 0.893 & 1.6 & k=7 & k=5 \\
 & N-Gram & 0.278 & 0.894 & 0.8 &  1111 (n=4) & 1111 (n=4)  \\
 & ALERGIA\ignore{ nsamples/heuristic/statecount, wer: (5000000, 'alergia', 5000) ndcg: (5000000, 'alergia', 5000) }  & 0.419 & 0.844 & 1.2 &  11 & 11  \\
\hline
{\spice} 3 ($ 10$, $ 2.15 $) & W\lstar{\boldmath$\ddagger$} & \textbf{0.327} & \textbf{0.928} & 1.0 &  675 \cexs{6} & 675 \cexs{6}  \\
 & Spectral & 0.466 & 0.843 & 1.2 & k=6 & k=8 \\
 & N-Gram & 0.46 & 0.847 & 0.8 & 1111 (n=4) & 11110 (n=5) \\
 & ALERGIA\ignore{ nsamples/heuristic/statecount, wer: (5000000, 'alergia', 10000) ndcg: (5000000, 'alergia', 10000) } 	{\boldmath$\ddagger\ddagger$} & 0.679 & 0.79 & 1.2 &  8 & 8  \\
\hline
{\spice} 4 ($ 33$, $ 1.73 $) & W\lstar & 0.301 & 0.829 & 0.7 &  4999 \cexs{1} & 4999 \cexs{1}  \\
 & Spectral & 0.453 & 0.727 & 1.2 & k=450 & k=250 \\
 & N-Gram & \textbf{0.099} & \textbf{0.968} & 0.8 & 186601 (n=6) & 61851 (n=5) \\
 & ALERGIA\ignore{ nsamples/heuristic/statecount, wer: (5000000, 'alergia', 10000) ndcg: (5000000, 'alergia', 10000) } {\boldmath$\ddagger\ddagger$} & 0.639 & 0.646 & 4.4 &  42 & 42  \\
\hline
{\spice} 6 ($ 60$, $ 1.66 $) & W\lstar & 0.593 & 0.644 & 2.5 &  5000 \cexs{1}  & 5000 \cexs{1}   \\
 & Spectral & 0.705 & 0.535 & 6.1 & k=17 & k=32 \\
 & N-Gram & \textbf{0.285} & \textbf{0.888} & 0.8 &  127817 (n=5) & 127817 (n=5)  \\
 & ALERGIA\ignore{ nsamples/heuristic/statecount, wer: (5000000, 'alergia', 5000) ndcg: (5000000, 'alergia', 5000) }  & 0.687 & 0.538 & 1.9 &  26 & 26  \\
\hline
{\spice} 7 ($ 20$, $ 1.8 $) & W\lstar & 0.626 & 0.642 & 0.5 &  4996 \cexs{3} & 4996 \cexs{3}  \\
 & Spectral & 0.801 & 0.472 & 2.4 & k=50 & k=27 \\
 & N-Gram & \textbf{0.441} & \textbf{0.812} & 0.7 &  133026 (n=5) & 133026 (n=5)  \\
 & ALERGIA\ignore{ nsamples/heuristic/statecount, wer: (5000000, 'alergia', 5000) ndcg: (5000000, 'alergia', 5000) }  & 0.735 & 0.569 & 1.4 &  8 & 8  \\
\hline
{\spice} 9 ($ 11$, $ 1.15 $) & W\lstar & 0.503 & 0.721 & 0.5 &  4992 \cexs{3} & 4992 \cexs{3}  \\
 & Spectral & 0.303 & 0.877 & 1.9 &  k=44 & k=44  \\
 & N-Gram & \textbf{0.123} & \textbf{0.961} & 1.0 &  44533 (n=6) & 44533 (n=6)  \\
 & ALERGIA\ignore{ nsamples/heuristic/statecount, wer: (5000000, 'alergia', 5000) ndcg: (5000000, 'alergia', 5000) }  & 0.501 & 0.739 & 1.1 &  44 & 44  \\
\hline
{\spice} 10 ($ 20$, $ 2.1 $) & W\lstar & 0.651 & 0.593 & 0.9 &  4987 \cexs{6} & 4987 \cexs{6}  \\
 & Spectral & 0.845 & 0.4 & 1.7 & k=42 & k=41 \\
 & N-Gram & \textbf{0.348} & \textbf{0.845} & 0.8 &  153688 (n=5) & 153688 (n=5)  \\
 & ALERGIA\ignore{ nsamples/heuristic/statecount, wer: (5000000, 'alergia', 5000) ndcg: (5000000, 'alergia', 5000) }  & 0.81 & 0.51 & 2.0 &  13 & 13  \\
\hline
{\spice} 14 ($ 27$, $ 0.89 $) & W\lstar & 0.442 & 0.716 & 0.8 &  4999 \cexs{2} & 4999 \cexs{2}  \\
 & Spectral{\boldmath$\dagger\dagger$} & 0.531 & 0.653 & 2.4 &  k=100 & k=100  \\
 & N-Gram & \textbf{0.079} & \textbf{0.977} & 0.7 & 125572 (n=6) & 46158 (n=5) \\
 & ALERGIA\ignore{ nsamples/heuristic/statecount, wer: (5000000, 'alergia', 10000) ndcg: (5000000, 'alergia', 10000) } {\boldmath$\ddagger\ddagger$} & 0.641 & 0.611 & 1.2 &  19 & 19  \\
\end{tabular}
	\caption{\spice ~results. Each language is listed with its alphabet size $|\Sigma|$ and RNN test loss $\ell$. The $n$-grams and sample-based PDFAs were created from 5,000,000  samples, 			and shared samples. \flexfringe was run with state\textunderscore count${=}5000$. 			 Our algorithm was run with $t{=}0.1,\varepsilon_P,\varepsilon_S{=}0.01,|P|{\leq}5000$ and $|S|{\leq}100$,		 and spectral with $|P|,|S|{=} 1000 $, with some exceptions: {\boldmath$\dagger$}:$t{=}0.05, \varepsilon_S,\varepsilon_P{=}0.0$, {\boldmath$\ddagger$}:$\varepsilon_S{=}0$,
	{\boldmath$\dagger\dagger$}:$|P|,|S|{=}750$, {\boldmath$\ddagger\ddagger$}:state\textunderscore count${=}10,000$.}
	\label{tab:spice}

\end{table}

Our algorithm succeeds in perfectly reconstructing the target PDFA structure for each of the {\uhl} languages, and giving it transition weights within the given \toltype tolerance 
(when extracting from the RNN and not directly from the original target, the weights can only be as good as the RNN has learned).
The sample-based PDFA learning method, ALERGIA, achieved good WER and NDCG scores but did not manage to reconstruct the original PDFA structure. This may be improved by taking a larger sample size, though it comes at the cost of efficiency.

%checking ndcg with params: num samples: 2000, max sample len: 200, ndcg k: 2
%checking wer with params: num samples: 2000, sample cutoff len: 100
\begin{table}
	\centering
\small
\begin{tabular}{c l|| r r r c c}
	Language ($|\Sigma|,\ell$)& Model & WER$\downarrow$ & NDCG$\uparrow$ & Time (s) & WER Size & NDCG Size \\
	\hline

\hline
{\uhl} 1 \ignore{2, (3,)} ($ 2$, $ 0.72 $) & W\lstar & \textbf{0.0} & \textbf{1.0} & 15 &  9 \cexs{2} & 9 \cexs{2}  \\
 & Spectral & \textbf{0.0} & \textbf{1.0} & 56 & k=80 & k=150 \\
 & N-Gram & 0.129 & 0.966 & 259 &  63 (n=6) & 63 (n=6)  \\
 & ALERGIA\ignore{ nsamples/heuristic/statecount, wer: (500000, 'alergia', 50) ndcg: (500000, 'alergia', 50) }  & 0.004 & 0.999 & 278 &  56 & 56  \\
\hline
{\uhl} 2 \ignore{3, (5,)} ($ 5$, $ 1.32 $) & W\lstar & \textbf{0.0} & \textbf{1.0} & 73 &  5 \cexs{0} & 5 \cexs{0}  \\
 & Spectral & 0.002 & 1.0 & 126 & k=49 & k=47 \\
 & N-Gram & 0.12 & 0.94 & 269 &  3859 (n=6) & 3859 (n=6)  \\
 & ALERGIA\ignore{ nsamples/heuristic/statecount, wer: (500000, 'alergia', 50) ndcg: (500000, 'alergia', 50) }  & 0.023 & 0.979 & 329 &  25 & 25  \\
\hline
{\uhl} 3 \ignore{7, (0.525, 0.425)} ($ 2$, $ 0.86 $) & W\lstar & \textbf{0.0} & \textbf{1.0} & 55 &  4 \cexs{1} & 4 \cexs{1}  \\
 & Spectral & \textbf{0.0} & \textbf{1.0} & 71 & k=44 & k=17 \\
 & N-Gram & 0.189 & 0.991 & 268 &  63 (n=6) & 63 (n=6)  \\
 & ALERGIA\ignore{ nsamples/heuristic/statecount, wer: (500000, 'alergia', 50) ndcg: (500000, 'alergia', 50) }  & 0.02 & 0.999 & 319 &  47 & 47  \\
\end{tabular}
	\caption{\uhl ~results. Each language is listed with its alphabet size $|\Sigma|$ and RNN test loss $\ell$. The $n$-grams and sample-based PDFAs were created from 500,000  samples, 			and shared samples. \flexfringe was run with state\textunderscore count =  50 . 			 Our algorithm was run with $t{=}0.1,\varepsilon_P,\varepsilon_S{=}0.01,|P|{\leq}5000$ and $|S|{\leq}100$,		 and spectral with $|P|,|S|{=} 250 $.}
	\label{tab:uhl}
\end{table}

\para{Tomita Grammars} The full results for the Tomita extractions are given in Appendix \ref{App:smalls}. 

All of the methods  reconstruct them with perfect or near-perfect WER and NDCG, except for $n$-gram which sometimes fails.
For each of the Tomita RNNs, our algorithm extracted and accepted a PDFA with identical structure to the original target in approximately 1 minute (the majority of this time was spent on sampling the RNN and hypothesis before accepting the equivalence query). These PDFAs had transition weights within the variation tolerance of the corresponding target transition weights.

\paragraph{On the effectiveness of n-grams} The n-gram models prove to be a very strong competitors for many of the languages.
Indeed, n-gram models are very effective for learning in cases where the underlying languages have strong local properties, or can be well approximated using local properties, which is rather common (see e.g., Sharan et al. \cite{sharan16}). However, there are many languages, including ones that can be modeled with PDFAs, for which the locality property does not hold, as demonstrated by the {\uhl} experiments.

As $n$-grams are merely tables of observed samples, they are very quick to create. However, their simplicity also works against them: the table grows exponentially in $n$ and polynomially in $|\Sigma|$.
In the future, we hope that our algorithm can serve as a base for creating reasonably sized finite state machines that will be competitive on real world tasks.

\section{Conclusions}\label{Se:Conc}
We present a novel technique for learning a distribution over sequences from a trained LM-RNN. 
The technique allows for some variation between the predictions of the RNN's internal states while still merging them, enabling extraction of a PDFA with fewer states than in the target RNN.
It can also be terminated before completing, while still maintaining guarantees of local similarity to the target. 
The technique
does not make assumptions about the target model's representation, and can be applied to any language model -- including LM-RNNs and transformers.
It also does not require a probabilistic target, and can be directly applied to recreate any WDFA.

When applied to stochastic models such as LM-RNNs, the algorithm returns PDFAs, which are a desirable model for LM-RNN extraction because they are deterministic and therefore faster and more interpretable than WFAs. We apply it to RNNs trained on data taken from small PDFAs and HMMs, evaluating the extracted PDFAs against their target LM-RNNs and comparing to extracted WFAs and n-grams. 
When the LM-RNN has been trained on a small target PDFA, the algorithm successfully reconstructs a PDFA that has identical structure to the target, and local probabilities within tolerance of the target. 
For simple languages, our method is generally the strongest of all those considered. However for natural languages $n$-grams maintain a strong advantage. Improving our method to be competitive on naturally occuring languages as well is an interesting direction for future work.

\section*{Acknowledgments}
 The authors wish to thank R\'{e}mi Eyraud for his helpful discussions and comments, and Chris Hammerschmidt for his assistance in obtaining the results with \flexfringe.
 The research leading to the results presented in this paper is supported by the 
 Israeli Science Foundation (grant No.1319/16), and the
 European Research Council (ERC) under the European Union’s 
 Seventh Framework Programme (FP7-2007-2013),
under grant agreement no. 802774 (iEXTRACT).

\bibliographystyle{plain}
\bibliography{bib}

\clearpage
\appendix

\section*{Supplementary Material}
\section{Guarantees}\label{App:Guarantees}
We show that our algorithm returns a PDFA, and discuss the relation between the obtained PDFA $\hypoaut$ and the target $T$ when anytime stopping is and isn't used.

\subsection{Probability}
\begin{theorem} The algorithm returns a PDFA. \end{theorem}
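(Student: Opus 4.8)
The plan is to verify directly that the tuple $\mathcal{A}=\langle C,\Sigma,\delta_Q,c(\varepsilon),\delta_W\rangle$ produced by Step~2 satisfies every clause of the PDFA definition: that $C$ is a finite state set, that $\delta_Q$ is a well-defined total deterministic transition function, that $\delta_W$ maps into $[0,1]$, and that the weights leaving each state sum to $1$. Finiteness of $C$ is immediate, as $C$ is a partition of the finite prefix set $P$. For $\delta_Q$, I would first use the \emph{Determinism} constraint on $C$, which guarantees $k_{c,\sigma}\leq 1$ for every $c,\sigma$, so that the direct rule $\delta_Q(c(p),\sigma)\triangleq c(p\concat\sigma)$ does not depend on the representative $p$ chosen from the cluster and is therefore well-defined; for pairs with $k_{c,\sigma}=0$ the ``best cluster match'' assignment supplies a destination, so $\delta_Q$ is also total. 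Determinism of the resulting automaton is then structural.

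The substance of the proof is the weight function. I would argue $\delta_W(c,\sigma)\in[0,1]$ by observing that each $P_T^l(p\concat\sigma)$ is a last-token (conditional) probability in $[0,1]$ and each coefficient $P_T^p(p)$ is non-negative, so $\delta_W(c,\sigma)$ is a convex combination of numbers in $[0,1]$ and hence lies in $[0,1]$. This presupposes the denominator $\sum_{p\in c}P_T^p(p)$ is positive, which I would record separately as a side condition: it holds trivially for the cluster of $\varepsilon$ since $P_T^p(\varepsilon)=1$, and for every other cluster that is genuinely reached it follows from the fact that rows are observed only where the relevant conditional probabilities are defined.

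The key computation is stochasticity. The crucial identity is that for any single prefix $p$ with $P_T^p(p)>0$ the last-token probabilities sum to one, $\sum_{\sigma\in\Sigma_\$}P_T^l(p\concat\sigma)=1$. I would derive this from the definitions by decomposing the prefix probability according to the first symbol consumed after $p$, namely $P_T^p(p)=P_T(p)+\sum_{\sigma\in\Sigma}P_T^p(p\concat\sigma)$, and then dividing by $P_T^p(p)$ after substituting $P_T^l(p\concat\sigma)=P_T^p(p\concat\sigma)/P_T^p(p)$ and $P_T^l(p\concat\$)=P_T(p)/P_T^p(p)$. Given this identity, I would exchange the order of summation in $\sum_{\sigma\in\Sigma_\$}\delta_W(c,\sigma)$, factor out the common denominator, and replace the inner sum $\sum_{\sigma\in\Sigma_\$}P_T^l(p\concat\sigma)$ by $1$, leaving $\left(\sum_{p\in c}P_T^p(p)\right)/\left(\sum_{p\in c}P_T^p(p)\right)=1$.

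The point requiring care — more a conceptual subtlety than a hard obstacle — is precisely this last step: because $t$-equality is non-transitive, the prefixes grouped into one cluster need not share the same next-token distribution, so one cannot justify stochasticity by saying ``the weight of a state is the distribution of its members.'' What rescues the argument is that stochasticity is a linear, per-prefix property: each member's last-token probabilities sum to $1$ irrespective of mutual $t$-equality, and a prefix-probability-weighted average of distributions is again a distribution. I would also discharge the positivity of the denominator as noted above, and observe that the anytime-stopping variant changes only which transitions are filled by ``best cluster match'' and therefore leaves the weight computation, and hence stochasticity, untouched.
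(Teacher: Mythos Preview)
Your proposal is correct and follows essentially the same approach as the paper's proof: determinism of $\delta_Q$ via the $k_{c,\sigma}\leq 1$ constraint plus the best-cluster-match fallback, and stochasticity via the identity $\sum_{\sigma\in\Sigma_\$}P_T^l(p\concat\sigma)=1$ together with swapping the order of summation in the weighted average. You are in fact somewhat more careful than the paper in explicitly deriving that identity and flagging the denominator-positivity side condition, which the paper simply assumes.
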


\begin{proof} 
Let $C$ be the final clustering of $P$ achieved by the method in \cref{OurAlg:cons}. By construction, the algorithm returns a finite state machine $A=\langle C,\Sigma,c(\varepsilon),\delta_Q,\delta_W,\beta \rangle$ with well defined states, initial state, transition weights and stopping weights. 
We show that this machine is deterministic and probabilistic, i.e.:
\begin{compactenum}
\item \emph{Deterministic}: for every $c\in C,\sigma\in\Sigma$, $\delta_Q(c,\sigma)$ is uniquely defined 
\item \emph{Probabilistic}: for every $c\in C,\sigma\in\Sigma$: $\delta_Q(c,\sigma)\in [0,1]$, $\beta(c)\in [0,1]$, and
$\beta(c)+\sum_{\sigma\in\Sigma}\delta_W(c,\sigma)=1$.
\end{compactenum}
\emph{Proof of (1):} By the final refinement of the clustering (Determinism II), $k_{c,\sigma}\leq 1$ and so by construction $\delta_Q(c,\sigma)$ is assigned at most one value. If, and only if, $k_{c,\sigma}<1$, then $\delta_Q(c,\sigma)$ is assigned some best available value. So $\delta_Q(c,\sigma)$ is always assigned exactly one value.

\emph{Proof of (2)}: the values of $\delta_W$ and $\beta$ are weighted averages of probabilities, and so also in $[0,1]$ themselves. They also sum to $1$ as they are averages of distributions. Formally,
for every $c\in C$:
$$\beta(c)+\sum_{\sigma\in\Sigma}\delta_W(c,\sigma) =
$$
$$ \frac{\sum_{p\in c} P_T^p(p) P_T^l(p\concat\$)}
{\sum_{p\in c} P_T^p(p)} + 
\sum_{\sigma\in\Sigma}
\frac {\sum_{p\in c} P_T^p(p) P_T^l(p\concat\sigma)}
{\sum_{p\in c} P_T^p(p)} = $$
$$ \frac{\sum_{p\in c} P_T^p(p)P_T^l(p\concat\$)}
{\sum_{p\in c} P_T^p(p)} + 
\frac {\sum_{p\in c} \sum_{\sigma\in\Sigma} P_T^p(p) P_T^l(p\concat\sigma)}
{\sum_{p\in c} P_T^p(p)} = $$
$$ \frac {\sum_{p\in c}  P_T^p(p)
 \sum_{\sigma\in\Sigma_\$} P_T^l(p\concat\sigma)}
{\sum_{p\in c} P_T^p(p)} \underset{(*)}{=} 
\frac {\sum_{p\in c}  P_T^p(p)}
{\sum_{p\in c} P_T^p(p)} = 1
$$
where $(*)$ follows from the probabilistic behaviour of $T$: $\sum_{\sigma\in\Sigma_\$} P_T^l(p\concat\sigma)=1$ for any $p\in\Sigma^*$.
\end{proof}

\subsection{Progress}

We consider extraction using noise tolerance $t$ from some target $T=\langle Q,\Sigma,q^i,\delta_Q,\delta_W^T\rangle$. 
For the observation table $O_{P,S}$ at any stage, we denote $n_{P,S}$ the size of the largest set of pairwise $t$-distinguishable rows $O_S(p),p\in P$.

Let $A$ be an automaton constructed by the algorithm, whether or not it was stopped ahead of time. Let $O_{P,S}$ be the observation table reached before making $A$, $C\subset\mathbb{P}(P)$ be the clustering of $P$ attained when building $A$ from $O_{P,S}$ (i.e., the states of $A$), and denote $A=\langle C,\Sigma,c^i,\delta_C,\delta_W^A\rangle$.
Denote $c:P\rightarrow C$ the cluster for each prefix, i.e. $p\in c(p)$ for every $p\in P$. 
In addition, for every cluster $c\in C$, denote $p_c$ the prefix $p_c\in c$ from which $\delta_W^A(c,\circ)$ was defined when building $A$.

We show that as the algorithm progresses, it defines a monotonically increasing group of sequences $W\subset\Sigma^{+\$}$ on which the target $T$ and the algorithm's automata $A$ are $t$-consistent, and that this group is $P\concat\Sigma_\$$.

\begin{lemma} \label{Lemma:PrefClosedP}
$P$ is always prefix closed.
\end{lemma}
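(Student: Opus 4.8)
The plan is to induct on the operations performed by the algorithm, treating prefix closedness as a loop invariant. Since $P$ is only ever enlarged and never pruned, removals cannot break the property, so it suffices to verify the base case and to check that each operation adding elements to $P$ preserves closedness. For the base case, the table is initialised with $P=\{\varepsilon\}$, which is trivially prefix closed: the only prefix of $\varepsilon$ is $\varepsilon$ itself.

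For the inductive step I would isolate the two places where $P$ grows. The first is counterexample handling: upon receiving a counterexample $w$, all of its prefixes are added to $P$. This preserves the property because the prefix set of a single word is itself prefix closed, and the union of two prefix-closed sets is again prefix closed (any prefix of a prefix of $w$ is a prefix of $w$). The second is the table-expansion loop, where a single sequence $q$ taken from the queue $L$ is added to $P$ exactly when $q\notin P$ and no $t$-equal row $p'\approxbar{(t,S)}q$ already exists.

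The heart of the argument is an auxiliary invariant on $L$: every element of $L$ that is not already in $P$ has the form $p\concat\sigma$ for some $p\in P$ and $\sigma\in\Sigma$. This holds because $L$ is initialised to the current $P$ — contributing only members already in $P$ — and is thereafter augmented solely through the sets $p\concat\Sigma$, added when a prefix $p$ already lying in $P$ is processed. As $P$ only grows, such a $p$ stays in $P$. Hence, whenever the loop adds a fresh $q$ to $P$, its immediate prefix $q_{:-1}$ is already in $P$; combined with the inductive hypothesis that $P$ was prefix closed, every proper prefix of $q$ lies in $P$, so $P\cup\{q\}$ is prefix closed. The remaining expansion action — adding a separating suffix to $S$ and restarting — leaves $P$ untouched and so cannot disturb the property.

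I expect the only delicate point to be pinning down this queue invariant: one must confirm that nothing enters $L$ except as a member of $P$ at initialisation or as a one-symbol extension $p\concat\sigma$ of a $p\in P$, which is what guarantees each newly added prefix is a direct child of an existing one. Once that bookkeeping is established, the rest reduces to the routine two-case induction sketched above.
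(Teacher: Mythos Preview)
Your proposal is correct and follows essentially the same approach as the paper: induction on the operations that modify $P$, with the base case $P=\{\varepsilon\}$ and the two enlargement cases (closedness additions of the form $p\concat\sigma$ with $p\in P$, and counterexample handling which adds all prefixes of $w$). The paper's proof is terser---it simply asserts that closedness additions are one-symbol extensions of existing $p\in P$---whereas you take the extra care to justify this via the queue invariant on $L$, which is a reasonable elaboration but not a different idea.
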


\begin{proof}
$P$ begins as $\{\varepsilon\}$, which is prefix closed. 
Only two operations add to $P$: closedness and counterexamples. 
When adding from closedness, the new prefix added to $P$ is of the form $p\concat\sigma$ for $p\in P,\sigma\in\Sigma$ and so $P$ remains prefix closed. 
When adding from a counterexample $w$, $w$ is added along with all of its prefixes, and so $P$ remains prefix closed.
\end{proof}

\begin{lemma} \label{Lemma:RightCluster}
For every $p\in P$, $\hat{\delta_C}(c^i,p)=c(p)$, i.e. $p\in\hat{\delta_C}(c^i,p)$. 
\end{lemma}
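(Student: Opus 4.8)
The plan is to prove this by induction on $|p|$, where the essential ingredient is that $P$ is prefix closed (\cref{Lemma:PrefClosedP}): this guarantees that whenever $p\in P$ with $p=p'\concat\sigma$, the shorter prefix $p'$ is also in $P$ and hence has a well-defined cluster $c(p')$, so the induction hypothesis has something to say about it. Without prefix-closedness the recursion $\hat{\delta_C}$ could pass through clusters of prefixes that were never placed in the table, and the statement would be meaningless.

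First I would dispatch the base case $p=\varepsilon$. By definition of the recursive transition function, $\hat{\delta_C}(c^i,\varepsilon)=c^i$, and by construction the initial state is $c^i=c(\varepsilon)$, so $\hat{\delta_C}(c^i,\varepsilon)=c(\varepsilon)=c(p)$. For the inductive step, write $p=p'\concat\sigma$ with $\sigma\in\Sigma$. Prefix-closedness gives $p'\in P$, so the induction hypothesis yields $\hat{\delta_C}(c^i,p')=c(p')$. Unfolding the definition of $\hat{\delta_C}$ then gives
$$\hat{\delta_C}(c^i,p)=\hat{\delta_C}(c^i,p'\concat\sigma)=\delta_C(\hat{\delta_C}(c^i,p'),\sigma)=\delta_C(c(p'),\sigma),$$
so it remains to identify $\delta_C(c(p'),\sigma)$ with $c(p)$.

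The one point requiring care is that $\delta_C$ is defined by two different rules in \cref{OurAlg:cons}: directly from $C$ when $k_{c,\sigma}\geq 1$, and by a ``best cluster match'' fallback when $k_{c,\sigma}=0$. Here I would observe that $p'\in c(p')$ and $p'\concat\sigma=p\in P$ witness $k_{c(p'),\sigma}\geq 1$, so $\delta_C(c(p'),\sigma)$ is governed by the direct rule rather than the fallback. The direct rule sets $\delta_C(c(p'),\sigma)\triangleq c(p'\concat\sigma)=c(p)$, and this value is unambiguous because the final clustering satisfies determinism ($k_{c,\sigma}\leq 1$ after the Determinism II refinement), which is exactly what makes the direct definition single-valued. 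Combining this with the previous display completes the step. I do not expect any genuine obstacle: the only subtlety is the explicit verification that the transition stays in the directly-defined regime, so that the fallback heuristic of \cref{force-heuristics} never interferes with transitions internal to $P$.
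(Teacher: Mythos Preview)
Your proof is correct and follows essentially the same approach as the paper: induction on $|p|$, invoking prefix-closedness (\cref{Lemma:PrefClosedP}) to ensure the shorter prefix lies in $P$, then using the direct definition of $\delta_C$ on that prefix. Your explicit remark that $p'\concat\sigma\in P$ forces $k_{c(p'),\sigma}\geq 1$ (so the fallback heuristic is never invoked) is a welcome clarification of a point the paper leaves somewhat implicit.
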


\begin{proof}
We show this by induction on the length of $p$. For $|p|=0$ i.e. for $\varepsilon$, 
$\hatdelta_{C}(\varepsilon)=c^i$ by definition of the recursive application of $\delta_C$,
and $c^i$=$c(\varepsilon)$ by construction (in the algorithm). We assume correctness of the lemma for $|p|=n,p\in P$. Consider $p\in P$, $|p|=n+1$, denote $p=r\concat\sigma,r\in\Sigma^*,\sigma\in\Sigma$. By the prefix closedness of $P$, $r\in P$, and so by the assumption $\hatdelta_{C}(r)=c(r)$. Now by the definition of $\hatdelta_{C}$, $\hatdelta_{C}(p)=\delta_C(\hatdelta_{C}(r),\sigma)=\delta_C(c(r),\sigma)$. By the construction of $A$, $c(r)$ is defined such that $\delta_C(c(r),\sigma)= c(p\concat\sigma)$ for every $s\in c(r)$ s.t. $s\concat\sigma\in P$, and so in particular for $r\in c(r)$, as $r\concat\sigma=p\in P$). This results in $\hatdelta_{C}(p)=\delta_Q(c(r),\sigma) = c(p)$, as desired.
\end{proof}

\begin{lemma} \label{Lemma:WeightsTSim}
For every $p\in P$ and $\sigma\in\Sigma_\$$, $\delta_A(c(p),\sigma)\tapprox P^l_T(p\concat\sigma)$.
\end{lemma}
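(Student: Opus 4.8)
The plan is to read off the statement directly from the construction formula for the transition weights together with the \emph{Cliques} property of the clustering $C$. Recall that when building $A$ the algorithm sets
$$\delta_A(c,\sigma) = \frac{\sum_{q\in c} P_T^p(q)\cdot P_T^l(q\concat\sigma)}{\sum_{q\in c} P_T^p(q)},$$
so for a fixed $c=c(p)$ and $\sigma\in\Sigma_\$$ the value $\delta_A(c(p),\sigma)$ is a \emph{convex combination} of the target last-token probabilities $P_T^l(q\concat\sigma)$ over $q\in c(p)$, with non-negative weights $w_q = P_T^p(q)/\sum_{q'\in c(p)} P_T^p(q')$ that sum to $1$. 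Since $t$-equality amounts to an $\|\cdot\|_\infty$ bound, a weighted average of values that are all within $t$ of a common reference must itself stay within $t$ of that reference, and the natural reference is $P_T^l(p\concat\sigma)$ itself.

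First I would note that $S$ is initialised to $\Sigma_\$$ and only ever grows, so $\Sigma_\$\subseteq S$ throughout the run; hence $\sigma\in S$, and the coordinate of the row vector $P^l_S(q)$ indexed by $\sigma$ is exactly $P_T^l(q\concat\sigma)$. Next I would invoke the Cliques constraint guaranteed by the PDFA-construction step: every $q\in c(p)$ satisfies $q\approxbar{(t,S)}p$, i.e.\ $\|P^l_S(q)-P^l_S(p)\|_\infty\le t$. Restricting this bound to the $\sigma$-coordinate yields $|P_T^l(q\concat\sigma)-P_T^l(p\concat\sigma)|\le t$ for \emph{every} $q\in c(p)$, which is the single ingredient the averaging argument needs.

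It then remains to carry out the convex-combination estimate:
$$\bigl| \delta_A(c(p),\sigma) - P_T^l(p\concat\sigma) \bigr| = \Bigl| \sum_{q\in c(p)} w_q\bigl(P_T^l(q\concat\sigma) - P_T^l(p\concat\sigma)\bigr)\Bigr| \le \sum_{q\in c(p)} w_q\cdot t = t,$$
using the triangle inequality, the coordinate-wise clique bound, and $\sum_q w_q = 1$; this is precisely $\delta_A(c(p),\sigma)\tapprox P^l_T(p\concat\sigma)$. The argument is essentially routine once set up; the only genuinely delicate point is well-definedness of $\delta_A$, namely that the denominator $\sum_{q\in c(p)} P_T^p(q)$ is nonzero. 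I would dispatch this by observing that $p$ itself lies in $c(p)$ and that every prefix retained in $P$ is reachable with positive prefix probability in $T$ (the oracle only returns $P^l_T$ on sequences with positive prefix probability), so at least one summand is strictly positive and the weights $w_q$ form a genuine probability vector. With that caveat handled, the lemma follows.
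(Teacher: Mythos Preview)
Your proof is correct and follows essentially the same route as the paper's: both invoke the clique property of $C$ to get $|P_T^l(q\concat\sigma)-P_T^l(p\concat\sigma)|\le t$ for all $q\in c(p)$ (using $\Sigma_\$\subseteq S$), then observe that $\delta_A(c(p),\sigma)$ is a weighted average of these values and hence remains within $t$ of $P_T^l(p\concat\sigma)$. Your version is simply more explicit, spelling out the convex-combination estimate and addressing positivity of the denominator, which the paper leaves implicit.
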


\begin{proof}
By construction of $A$, in particular by the clique requirement for the clusters of $C$, all of the prefixes $p'\in c(p)$ satisfy $\oracle_S(p')=O_S(p')\tapprox O_S(p)=\oracle_S(p)$, and in particular for $\Sigma_\$\subseteq S$: $\oracle_{\Sigma_\$}(p')\tapprox \oracle_{\Sigma_\$}(p)$ (recall that $S$ is initiated to $\Sigma_\$$ and never reduced). 
$\delta_A(c(p),\sigma)$ is defined as the weighted average of $\oracle(p'\concat\sigma)$ for each of these $p'\in c(p)$, and so it is also $t$-equal to $\oracle(p\concat\sigma)$ i.e. $P_T^l(p\concat\sigma)$, as desired.
\end{proof}

\begin{theorem} \label{Thm:ATtCons}
For every $p\in P,\sigma\in\Sigma_\$$, $A,T$ are $t$-consistent on $p\concat\sigma$.
\end{theorem}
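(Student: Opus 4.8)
The plan is to unfold the definition of $t$-consistency and verify the required bound one prefix at a time. By definition, $A$ and $T$ are $t$-consistent on $p\concat\sigma$ exactly when $P_A^l(u)\tapprox P_T^l(u)$ for every non-empty prefix $u$ of $p\concat\sigma$. These prefixes are precisely $p\concat\sigma$ together with all the non-empty prefixes of $p$, and in every case the truncation $u_{:-1}$ is a prefix of $p$. Since $p\in P$ and $P$ is prefix closed by \cref{Lemma:PrefClosedP}, each such $r\triangleq u_{:-1}$ lies in $P$ (with $r=\varepsilon\in P$ in the base case). This is the single fact I will lean on to make the remaining lemmas applicable.

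First I would reduce $P_A^l(u)$ to a single transition weight of $A$. Writing $u=r\concat a$ with $a=u_{-1}\in\Sigma_\$$ and $r=u_{:-1}$, the PDFA semantics give $P_A^l(u)=\delta_W^A(\hatdelta_{C}(r),a)$: for $a\in\Sigma$ this is the quotient $P_A^p(r\concat a)/P_A^p(r)$, in which the product of transition weights defining $P_A^p$ telescopes down to the last factor, and for $a=\$$ the identity $P_A^l(r\concat\$)=P_A(r)/P_A^p(r)=\delta_W^A(\hatdelta_{C}(r),\$)$ holds by the same cancellation. This step converts a claim about $A$'s induced last-token distribution into a claim about its raw transition weights, which is the form the earlier lemmas speak about.

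Then I would chain the two remaining lemmas. Because $r\in P$, \cref{Lemma:RightCluster} identifies the reached state, $\hatdelta_{C}(r)=c(r)$, giving $P_A^l(u)=\delta_W^A(c(r),a)$. Applying \cref{Lemma:WeightsTSim} with the prefix $r\in P$ and token $a\in\Sigma_\$$ then yields $\delta_W^A(c(r),a)\tapprox P_T^l(r\concat a)=P_T^l(u)$. Combining the two gives $P_A^l(u)\tapprox P_T^l(u)$, and since this holds for every non-empty prefix $u$ of $p\concat\sigma$, the two models are $t$-consistent on $p\concat\sigma$, as required.

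I expect the only genuine content to be the first reduction, namely making precise that in a PDFA $P_A^l(r\concat a)=\delta_W^A(\hatdelta_{C}(r),a)$ uniformly for $a\in\Sigma$ and $a=\$$; everything after that is bookkeeping. The main obstacle is therefore not any hard estimate but the housekeeping guarantee that every truncation $u_{:-1}$ fed into \cref{Lemma:RightCluster} and \cref{Lemma:WeightsTSim} actually belongs to $P$, which is exactly what prefix-closedness (\cref{Lemma:PrefClosedP}) secures; without it the transition weight $\delta_W^A(c(r),\cdot)$ need not even be the quantity compared against $T$.
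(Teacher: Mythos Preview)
Your proposal is correct and follows essentially the same route as the paper: take an arbitrary non-empty prefix $u=r\concat a$ of $p\concat\sigma$, use prefix-closedness (\cref{Lemma:PrefClosedP}) to place $r\in P$, then chain \cref{Lemma:RightCluster} and \cref{Lemma:WeightsTSim} to obtain $P_A^l(u)=\delta_W^A(c(r),a)\tapprox P_T^l(u)$. The only difference is that you spell out the reduction $P_A^l(r\concat a)=\delta_W^A(\hatdelta_{C}(r),a)$ explicitly, whereas the paper treats this as immediate from the PDFA semantics.
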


\begin{proof}
let $u\neq\varepsilon$ be some prefix of $p\concat\sigma$. Necessarily $v=u_{:-1}$ is some prefix of $p\in P$, and so by the prefix-closedness of $P$ (\cref{Lemma:PrefClosedP}) $v\in P$. Denote $a=u_{-1}\in\Sigma_\$$. Then
$$P^l_T(u)=P^l_T(v\concat a)\tapprox \delta_A(c(v),a)=\delta_A(\hatdelta{C}(v),a)=P^l_A(u)$$
where the second and third transitions are justified for $v\in P$ by \cref{Lemma:WeightsTSim} and \cref{Lemma:RightCluster} respectively.
This for any prefix $u\neq\varepsilon$ of $p\concat\sigma$, and so by definition $A,T$ are $t$-consistent on $p\concat\sigma$ as desired.
\end{proof}

This concludes the proof that $A,T$ are always $t$-consistent on $P\concat\Sigma_\$$. We now show that the algorithm increases $P\concat\Sigma_\$$ every finite number of operations, beginning with a direct result from \cref{Thm:ATtCons}:

\begin{corollary} \label{Cor:CexGrowsP}
Every counterexample increases $P$ by at least $1$
\end{corollary}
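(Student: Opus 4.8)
The plan is to prove this by contradiction, reading it off as an almost immediate consequence of \cref{Thm:ATtCons}. First I would pin down the two definitions involved and make them talk to each other. On one hand, a counterexample $w$ is by definition a sequence on which the hypothesis $A$ and the target $T$ have next-token distributions that are not $t$-equal; unfolding the vector notion of $t$-equality together with the identity $P^n(w)(\sigma)=P^l(w\concat\sigma)$, this says there exists some $\sigma\in\Sigma_\$$ with $P_A^l(w\concat\sigma)\ntapprox P_T^l(w\concat\sigma)$, i.e.\ $A$ and $T$ are \emph{not} $t$-consistent on $w\concat\sigma$. On the other hand, the algorithm responds to $w$ by adding $w$ together with all of its prefixes to $P$.

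Next I would suppose, for contradiction, that this addition leaves $P$ unchanged, so that in particular $w$ itself was already in $P$. Applying \cref{Thm:ATtCons} to $p=w\in P$ and the witnessing letter $\sigma$ then guarantees that $A$ and $T$ \emph{are} $t$-consistent on $w\concat\sigma$; taking the full prefix $u=w\concat\sigma$ in the definition of $t$-consistency yields $P_T^l(w\concat\sigma)\tapprox P_A^l(w\concat\sigma)$, contradicting the choice of $\sigma$. Hence at least one prefix of $w$ must have been missing from $P$, and since every prefix of $w$ is added, $|P|$ strictly increases --- by at least one, as claimed.

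I do not expect a genuine obstacle here: the corollary is essentially \cref{Thm:ATtCons} read in the contrapositive. The only points requiring care are bookkeeping ones --- reconciling the ``next-token distribution'' phrasing of a counterexample with the ``last-token probability'' phrasing of $t$-consistency, and observing that $\varepsilon\in P$ always holds, so that the relevant discrepancy genuinely occurs at some prefix $u\neq\varepsilon$ and the empty-sequence edge case causes no trouble.
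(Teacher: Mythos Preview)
Your proposal is correct and follows essentially the same contradiction argument as the paper: assume $P$ does not grow, deduce that the relevant prefix already lies in $P$, and invoke \cref{Thm:ATtCons} to obtain $t$-consistency on the offending sequence, contradicting the counterexample property. The only cosmetic difference is that the paper's appendix phrases a counterexample directly as a sequence $w\in\Sigma^{+\$}$ with $P_T^l(w)\ntapprox P_A^l(w)$ (handled by adding its \emph{strict} prefixes), whereas you start from the main-text phrasing ($w\in\Sigma^*$ with non-$t$-equal next-token distributions) and explicitly extract the witnessing $\sigma\in\Sigma_\$$; these are the same argument up to the bookkeeping you already flagged.
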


\begin{proof}
Recall that counterexamples to proposed automata are sequences $w\in\Sigma^{+\$}$ for which $P_T^l(w)\ntapprox P_A^l(w)$, and that they are handled by adding all their strict prefixes to $P$. 
Assume by contradiction some counterexample $w\in\Sigma^{+\$}$ for which $P$ does not increase.
Then in particular $w_{:-1}\in P$, and by \cref{Thm:ATtCons}, $P_T^l(w)=P_T^l(w_{:-1}\concat w_{-1})\tapprox P_A^l(w_{:-1}\concat w_{-1})=P_A^l(w)$, a contradiction.
\end{proof}

\begin{lemma} \label{Lemm:InconsistencyCap}
Always, $|S|\leq \frac{|P|\cdot (|P|-1)}{2}+|\Sigma_\$|$. (i.e., every $O_{P,S}$ can only have had up to $\frac{|P|\cdot (|P|-1)}{2}$ inconsistencies in its making.)
\end{lemma}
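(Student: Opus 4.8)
The plan is to bound the number of \emph{separating suffixes} ever added to $S$. Since $S$ is initialised to $\Sigma_\$$ (of size $|\Sigma_\$|$), is never reduced (as recalled in the proof of \cref{Lemma:WeightsTSim}), and grows by exactly one element each time a consistency violation is resolved, we have $|S| = |\Sigma_\$| + (\text{number of separating suffixes added})$. Hence it suffices to show that at most $\frac{|P|\cdot(|P|-1)}{2}$ separating suffixes can ever have been added, where $|P|$ is the current size of $P$.

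First I would establish a monotonicity property of $t$-distinguishability: if at some stage two prefixes satisfy $p_1 \napproxbar{(t,S)} p_2$, i.e.\ $\|O_S(p_1)-O_S(p_2)\|_\infty > t$, then $p_1 \napproxbar{(t,S')} p_2$ for every later $S'\supseteq S$. This holds because enlarging $S$ only appends coordinates to the rows $O_S(p_1),O_S(p_2)$ without altering the existing ones, so the $\infty$-norm of their difference can only increase. The growth of $P$ is irrelevant here, as it leaves existing rows unchanged. Consequently, once a pair of prefixes is $t$-distinguished it remains $t$-distinguished permanently.

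Next I would analyse how each separating suffix arises. By the consistency-resolution step, a suffix is added to $S$ only when some $p_1,p_2\in P$ currently satisfy $p_1 \approxbar{(t,S)} p_2$, yet a witness suffix is found on which their rows diverge by more than $t$; after the suffix is added, $p_1 \napproxbar{(t,S)} p_2$. I would map each separating suffix to the unordered pair $\{p_1,p_2\}\subseteq P$ it thereby separates. This map is \emph{injective}: immediately before the suffix is added the pair is $t$-equal, so by the monotonicity property it cannot have been separated by any earlier suffix; thus no two separating suffixes are assigned the same pair.

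Finally, since prefixes are never removed from $P$, both $p_1$ and $p_2$ remain in $P$, so every image of the map lies in the set of unordered pairs drawn from the current $P$, which has cardinality $\frac{|P|\cdot(|P|-1)}{2}$. Injectivity then gives that the number of separating suffixes added so far is at most $\frac{|P|\cdot(|P|-1)}{2}$, and adding the initial $|\Sigma_\$|$ columns yields $|S|\leq \frac{|P|\cdot(|P|-1)}{2}+|\Sigma_\$|$. The only delicate point is the monotonicity step, as it is precisely what guarantees that separations are permanent and hence that the map is injective; the rest is a counting argument.
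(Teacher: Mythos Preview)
Your proposal is correct and follows essentially the same approach as the paper: both argue that $S$ starts at size $|\Sigma_\$|$, grows by one per inconsistency, and that each unordered pair $\{p_1,p_2\}\subseteq P$ can trigger at most one such addition because once separated the pair stays separated (your monotonicity step). Your write-up is more explicit about the injectivity of the suffix-to-pair map and the permanence of separation, but the argument is the same.
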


\begin{proof}
$S$ is initiated to $\Sigma_\$$, so its initial size is $|\Sigma_\$|$. $S$ is increased only following inconsistencies, cases in which there exist $p_1,p_2\in P,\sigma\in\Sigma$ s.t. $p_1\neq p_2$ $O_S(p_1)\tapprox O_S(p_2)$, but $\oracle_S(p_1)\ntapprox \oracle_S(p_2)$. Once some $p_1,p_2\in P$ cause a suffix $s$ to be added to $S$, by construction of the algorithm, $O_S(p_1)\ntapprox O_S(p_2)$ for the remainder of the run (as $s\in S$ is a suffix for which $O(p_1,s)\ntapprox O(p_2,s)$). There are exactly $\frac{|P|\cdot(|P|-1)}{2}$ pairs $p_1\neq p_2\in P$ and so that is the maximum number of possible $S$ may have been increased in any run, giving the maximum size $|S|\leq \frac{|P|\cdot (|P|-1)}{2}+|\Sigma_\$|$.
\end{proof}

(Note: If the $t$-equality relation was transitive, it would be possible to obtain a linear bound in the size of $S$. However as it is not, it is possible that a separating suffix may be added to $S$ that separates $p_1$ and $p_2$ while leaving them both $t$-equal to to some other $p_3$.)

\begin{corollary}[Progress]
For as long as the algorithm runs, it strictly expands a group $\mathbb{C}\subset\Sigma^*$ of sequences on which the automata $A$ it returns is $t$-consistent with its target $T$.
\end{corollary}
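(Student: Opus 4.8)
The plan is to take $\mathbb{C}=P\concat\Sigma_\$$ as the claimed group and to establish two things: that every automaton $A$ the algorithm returns is $t$-consistent with $T$ on all of $\mathbb{C}$, and that $\mathbb{C}$ strictly grows within a finite number of operations for as long as the algorithm runs. The first part is immediate from \cref{Thm:ATtCons}, which already gives $t$-consistency of $A$ and $T$ on every $p\concat\sigma$ with $p\in P$, $\sigma\in\Sigma_\$$. Since $P$ is never reduced --- it is only extended, by closedness steps and by counterexample handling, and remains prefix closed by \cref{Lemma:PrefClosedP} --- the group $\mathbb{C}$ is monotonically non-decreasing over the run, and at every stopping point the current returned automaton is $t$-consistent on the current $\mathbb{C}$.

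For strictness I would argue that the algorithm cannot perform infinitely many operations while leaving $P$ fixed, so that $P$, and hence $\mathbb{C}$, must strictly grow within finitely many steps (or the algorithm halts on an accepted equivalence query). First I would record the two sources of growth of $P$: a closedness violation adds a prefix $p\concat\sigma$ to $P$ directly, and a counterexample adds at least one prefix by \cref{Cor:CexGrowsP}. It therefore suffices to bound the work done while $P$ is held fixed. With $P$ fixed, the only other state-changing table operation is adding a separating suffix to $S$; by \cref{Lemm:InconsistencyCap} this happens at most $\frac{|P|\cdot(|P|-1)}{2}$ times, so only finitely many inconsistency-driven restarts occur. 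Between restarts, with both $P$ and $S$ fixed, the closedness and consistency checks range over the finite sets $P\times\Sigma$ and $P\times P\times\Sigma$, so the queue processing terminates: each prefix drawn from $L$ either triggers a (bounded) growth of $P$ or $S$, or is discarded as $t$-equal to an existing row. Finally, an equivalence query with $P$ fixed either accepts $A$ and stops, or returns a counterexample that grows $P$ by \cref{Cor:CexGrowsP}. Hence $P$ cannot stay fixed through infinitely many operations.

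It then remains to check that each growth of $P$ is a \emph{strict} expansion of $\mathbb{C}$. When a genuinely new prefix $p\notin P$ is added, every $p\concat\sigma$ with $\sigma\in\Sigma_\$$ is new to $P\concat\Sigma_\$$: since a sequence $p\concat\sigma$ determines its last token $\sigma$ and its prefix $p$ uniquely, membership of $p\concat\sigma$ in the previous $P\concat\Sigma_\$$ would force $p$ into the previous $P$, contradicting newness. Thus $\mathbb{C}$ gains at least $p\concat\$$, and by \cref{Thm:ATtCons} applied to the enlarged table $A$ and $T$ remain $t$-consistent on the enlarged $\mathbb{C}$.

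The hard part will be the finiteness argument for a fixed $P$: the non-transitivity of $t$-equality is exactly what could, a priori, allow an unbounded cascade of separating suffixes, and the role of \cref{Lemm:InconsistencyCap} is precisely to cap this at $\frac{|P|\cdot(|P|-1)}{2}$. I would be careful to confirm that the queue $L$ cannot grow without bound while $P$ and $S$ are fixed --- spawned children $p\concat\Sigma$ outside $P$ are discarded without spawning further, while each element of $P$ has a unique parent and so re-enters $L$ only a bounded number of times --- and to confirm that a restart after an $S$-addition resets no monotone quantity needed for termination, which it does not, since $|S|$ only increases and is bounded in terms of the fixed $|P|$.
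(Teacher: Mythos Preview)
Your proposal is correct and follows essentially the same approach as the paper: take $\mathbb{C}=P\concat\Sigma_\$$, invoke \cref{Thm:ATtCons} for $t$-consistency, and then argue that $P$ must grow within finitely many operations by combining \cref{Cor:CexGrowsP} for counterexamples, \cref{Lemm:InconsistencyCap} to bound suffix additions while $P$ is fixed, and the finiteness of the closedness/consistency scans. You are slightly more careful than the paper in verifying that a new $p$ strictly enlarges $P\concat\Sigma_\$$ and in bounding the queue $L$, while the paper additionally checks that the clustering-based automaton construction terminates---a point you implicitly assume but could state explicitly.
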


\begin{proof}
From \cref{Thm:ATtCons}, $\mathbb{C}=P\times\Sigma_\$$ is a group of sequences on which $A$ is always $t$-consistent with $T$.
We show that $\mathbb{C}$ is strictly expanding as the algorithm progresses, i.e. that every finite number of operations, $P$ is increased by at least one sequence.

The algorithm can be split into 4 operations: searching for and handing an unclosed prefix or inconsistency, building (and presenting) a hypothesis PDFA, or handling a counterexample. 
We show that each one runs in finite time, and that there cannot be infinite operations without increasing $P$.

\para{Finite Runtime of the Operations}

\emph{Building $O_{P,S}$}: Finding and handling an unclosed prefix requires a pass over all $P\times\Sigma$, while comparing row values to $P$ -- all finite as $P$ is finite (rows are also finite as $S$ is bounded by $P$'s size). Similarly finding and handling inconsistencies requires a pass over rows for all $P^2\times\sigma$, also taking finite time.

\emph{Building an Automaton} requires finding a clustering of $P$ satisfying the conditions and then a straightforward mapping of the transitions between these clusters. The clustering is built by one initial clustering (DBSCAN) over the finite set $P$ and then only refinement operations (without merges). As putting each prefix in its own cluster is a solution to the conditions, a satisfying clustering will be reached in finite time. 
\emph{Counterexamples} Handling a counterexample $w$ requires  adding at most $|w|$ new rows to $O_{P,S}$. As $S$ is finite, this is a finite operation.

\para{Finite Operations between Additions to P}
Handling an unclosed prefix by construction increases $P$, and as shown in \cref{Cor:CexGrowsP}, so does handling a counterexample.
Building a hypothesis is followed by an equivalence query, after which the algorithm will either terminate or a counterexample will be returned (increasing $P$).
Finally, by \ref{Lemm:InconsistencyCap}, the number of inconsistencies between every increase of $P$ is bounded.
\end{proof}

\renewcommand\thefigure{\thesection.\arabic{figure}}
\setcounter{figure}{0}  
\section{Example}

\begin{figure} 
  \centering
  {\includegraphics[scale=0.15]{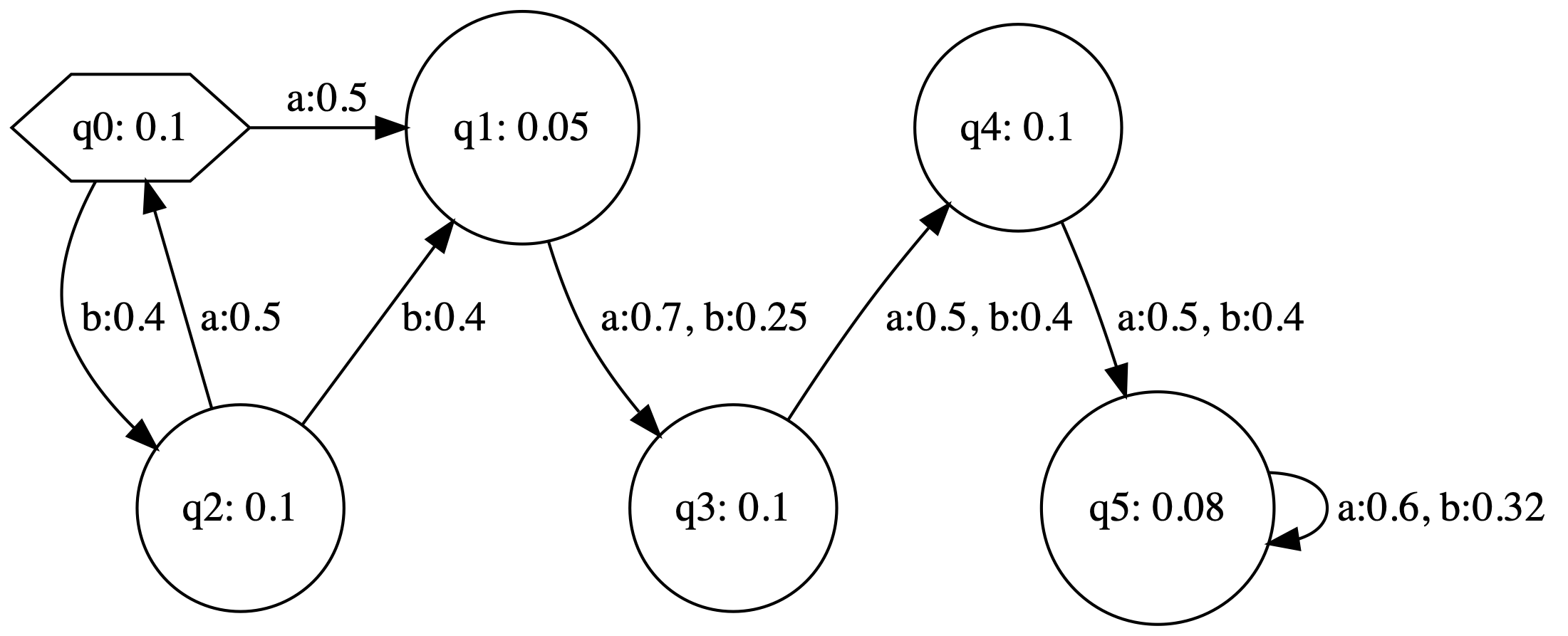}}
  \caption{Target PDFA $T$}
  \label{ex:target}
\end{figure}

\begin{figure} 
  \centering
  [$H1$]{\includegraphics[scale=0.25]{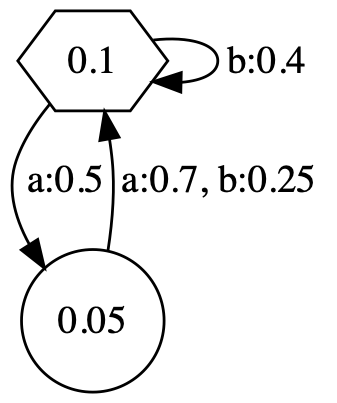}}
  [$H2$]{\includegraphics[scale=0.25]{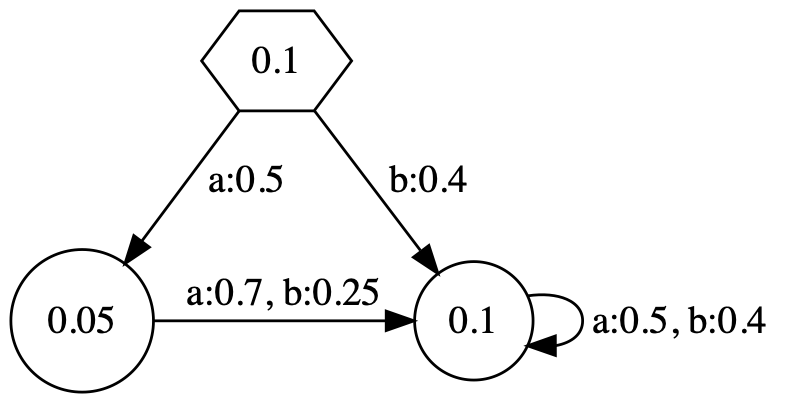}}
  [$H3$]{\includegraphics[scale=0.25]{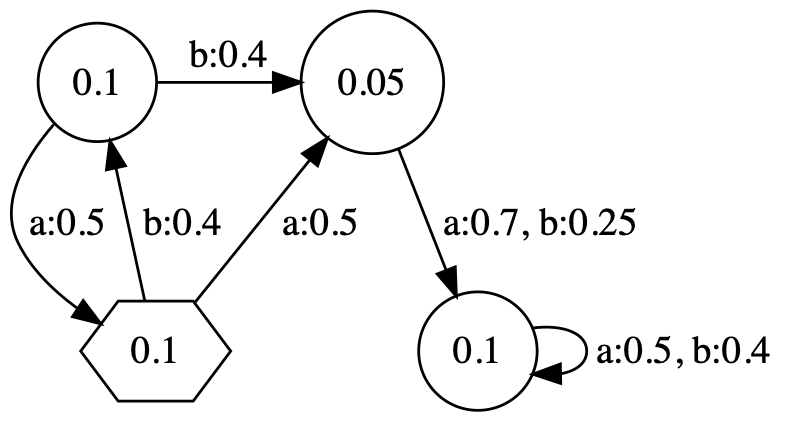}}
  \caption{Hypotheses during extraction from $T$}
  \label{ex:hypotheses}
\end{figure}

We extract from the PDFA $T$ presented in ~\ref{ex:target} using prefix and suffix thresholds $\varepsilon_P,\varepsilon_S=0$ and variation tolerance $t=0.1$. We limit the number of samples per equivalence query to $500$.
This extraction will demonstrate both types of table expansions, both types of clustering refinements, and counterexamples.
Notice that in our example, the state $q5$ is $t$-equal with respect to next-token distribution to both $q1$ and $q3$, though they themselves are not $t$-equal to each other. 

Extraction begins by initiating the table with $P=\{\emptystr\},S=\Sigma_\$$, and the queue $Q$ with $P$. We will pop from the queue in order of prefix weight, though this is not necessary when not considering anytime stopping.
At this point the table is:

 \begin{center}
 \begin{tabular}{c||  r r r }
\diagbox[width=3em]{P}{S} &  a & b & \$  \\
\hline
 $\emptystr$ & 0.5 & 0.4 & 0.1 
\end{tabular}
\end{center}

The first prefix considered is $\emptystr$, it is already in $P$. It is consistent simply as it is not similar to any other $p\in P$. However it might not be closed. Its continuations $\emptystr\concat\Sigma=\{a,b\}$ are added to $Q$, to check its closedness later. $Q$ is now $\{a,b\}$.

Next is $a$ (which has prefix weight $0.5$). $\oracle_S(a)=(0.7,0.25,0.05)$, which is not $t$-equal to the only row in the table: $O_S(\emptystr)=(0.5,0.4,0.1)$. It follows that $a_{:-1}=\emptystr$ was not closed, and $a$ is added to $P$. The table is now:

  \begin{center}
 \begin{tabular}{c||  r r r }
\diagbox[width=3em]{P}{S} &  a & b & \$  \\
\hline
 $\emptystr$ & 0.5 & 0.4 & 0.1\\
\hline
a & 0.7 & 0.25 & 0.05 
\end{tabular}
\end{center}

$a$ is also consistent simply as it has no $t$-equal rows. Its continuations $a\concat\Sigma$ are added to $Q$ to check closedness, giving $Q=\{b,ab,aa\}$. 

Now for each of $q\in Q$, $\oracle_S(q)=O_S(\emptystr)$, meaning that the table is closed. None of the prefixes in $Q$ are added to $P$, and so they are also not checked for consistency. The expansion stops and a clustering $C=\{\{\emptystr\},\{a\}\}$ is made ($\emptystr$ and $a$ are not $t$-equal). The transitions are mapped and the automaton $H1$ shown in figure \ref{ex:hypotheses} is presented for an equivalence query. 

$H1$ and $T$ are each sampled according to their distributions up to $500$ times, and $P^n_T(p),P^n_{H1}(p)$ are compared for every prefix $p$ of each sample. This soon yields the counterexample $c=aaa$, for which $P^n_{H1}(c)=(0.7,0.25,0.05)\napproxbar{0.1} (0.5,0.4,0.1)=P^n_T(c)$. $c$'s prefixes $\emptystr,a,aa,aaa$ are added to $P$ and the expansion restarts with $Q=P$ and table:

 \begin{center}
 \begin{tabular}{c||  r r r }
\diagbox[width=3em]{P}{S} &  a & b & \$  \\
\hline
 $\emptystr$ & 0.5 & 0.4 & 0.1\\
\hline
a & 0.7 & 0.25 & 0.05\\
\hline
aa & 0.5 & 0.4 & 0.1\\
\hline
aaa & 0.5 & 0.4 & 0.1 
\end{tabular}
\end{center}

$Q$ is processed: $\emptystr$ is already in $P$, $a,b$ are added to $Q$. We check its consistency with each of its $t$-equal rows, $aa$ and $aaa$, beginning with $aa$. For $a\in\Sigma$, $\oracle_S(\emptystr\concat a)=(0.7,0.25,0.05)\napproxbar{0.1}(0.5,0.4,0.1)=\oracle_S(aa\concat a)$, with the biggest difference (0.2) being on the suffix $a\in S$. The separating suffix $a\concat a\in\Sigma\concat S$ is added to $S$, separating $\emptystr$ and $aa$ in the table:

 \begin{center}
 \begin{tabular}{c||  r r r r }
\diagbox[width=3em]{P}{S} &  a & b & \$ & aa  \\
\hline
 $\emptystr$ & 0.5 & 0.4 & 0.1 & 0.7\\
\hline
a & 0.7 & 0.25 & 0.05 & 0.5\\
\hline
aa & 0.5 & 0.4 & 0.1 & 0.5\\
\hline
aaa & 0.5 & 0.4 & 0.1 & 0.6 
\end{tabular}
\end{center}

The expansion is restarted with $Q=P$. Eventually all of $P\concat\Sigma$ are processed and the table is found closed and consistent. The extraction moves to constructing a hypothesis. 

An initial clustering is made, in our case using \texttt{sklearn.cluster.DBSCAN} with parameter \texttt{min\textunderscore samples}=1. It returns $C_0=\{\{\emptystr,aa,aaa\},\{a\}\}$. However, this does not satisfy the determinism requirement: for $\emptystr$ and $aa$, which are both in the same cluster, their continuations with $a\in\Sigma$ are also in $P$ and appear in different clusters. The cluster $\{\emptystr,aa,aaa\}$ is split such that $\emptystr$ and $aa$ are separated. For $aaa$, whose continuation $aaaa$ is not in $P$, it is not important whether it joins $\emptystr$ or $aa$, and it is equally close (with respect to $L_\infty$ distance on rows) to both. The new clustering $C= \{\{aa,aaa\},\{a\},\{\emptystr\}\} $ is returned. This clustering satisfies $t$-equality ($aa\approxbar{t,S}aaa$), and a hypothesis can be made. 

For each cluster $c\in C$ there is a $p\in c$ for which $p\concat a\in P$ and so all of the $a$-transitions are simple to map. For $b$, the transitions are mapped according to the closest rows in the table, e.g. the $b$-transition from the initial state $c(\varepsilon)$ maps to $c(aa)$, as $\oracle_S(b)=(0.5,0.4,0.1,0.5)\tapprox (0.5,0.4,0.1,0.5)=O_S(aa)$. This yields the PDFA $H2$ shown in ~\ref{ex:hypotheses}.

Sampling $H2$ and $T$ soon yields the counterexample $bb$, for which $P_T^n(bb)=(0.7, 0.25, 0.05)\ntapprox (0.5, 0.4, 0.1)=P_{H2}^n(bb)$. All of $bb$'s prefixes are added to $P$, the queue is again initiated to $P$, and expansion restarts with the table:

 \begin{center}
 \begin{tabular}{c||  r r r r }
\diagbox[width=3em]{P}{S} &  a & b & \$ & aa  \\
\hline
 $\emptystr$ & 0.5 & 0.4 & 0.1 & 0.7\\
\hline
a & 0.7 & 0.25 & 0.05 & 0.5\\
\hline
aa & 0.5 & 0.4 & 0.1 & 0.5\\
\hline
aaa & 0.5 & 0.4 & 0.1 & 0.6\\
\hline
b & 0.5 & 0.4 & 0.1 & 0.5\\
\hline
bb & 0.7 & 0.25 & 0.05 & 0.5 
\end{tabular}
\end{center}

When the prefix $b$ is processed, an inconsistency is found: $b\approxbar{t,S}aa$, but $\oracle_S(bb)=(0.7,0.25,0.05,0.5)\ntapprox (0.5,0.4,0.1,0.6)=\oracle_S(aab)$, in particular on $a\in S$. $ba$ is added to $S$, $Q$ is reset to $P$, and the expansion restarts with the table:

 \begin{center}
 \begin{tabular}{c||  r r r r r }
\diagbox[width=3em]{P}{S} &  a & b & \$ & aa & ba  \\
\hline
 $\emptystr$ & 0.5 & 0.4 & 0.1 & 0.7 & 0.5\\
\hline
a & 0.7 & 0.25 & 0.05 & 0.5 & 0.5\\
\hline
aa & 0.5 & 0.4 & 0.1 & 0.5 & 0.5\\
\hline
aaa & 0.5 & 0.4 & 0.1 & 0.6 & 0.6\\
\hline
b & 0.5 & 0.4 & 0.1 & 0.5 & 0.7\\
\hline
bb & 0.7 & 0.25 & 0.05 & 0.5 & 0.5 
\end{tabular}
\end{center}

This time the table is found to be closed and consistent. \texttt{DBSCAN} gives the initial clustering $C_0= \{\{\emptystr,aa,aaa,b\},\{a,bb\}\} $, and as before the determinism refinement separates $a$ and $\emptystr$, giving $C_1= \{\{aa,aaa,b\},\{a,bb\},\{\emptystr\}\} $. Now the $t$-equality requirement is checked, and the first cluster does not satisfy it: while $aa\approxbar{t,S}aaa$ and $b\approxbar{t,S}aaa$, $aa\napproxbar{t,S}b$. The cluster is split across the suffix with the largest range, $ba$, yielding the new clustering $C= \{\{aa,aaa\},\{a,bb\},\{\emptystr\},\{b\}\} $. This clustering satisfies both determinism and $t$-equality and the hypothesis $H3$ is made, with $\sigma$-transitions from clusters $c$ for which there is no $p\in c$ such that $p\concat\sigma\in P$ (e.g. $b$ from $\{aa,aaa\}$) being made according to closest rows as described before. 

Sampling $500$ times from each of $H3$ and $T$ yields no counterexample, and indeed none exists even though the two are not exactly the same: the distributions of states $q5,q4$ and $q3$ of $T$ are $t=0.1$-equal, and the PDFAs $H3$ and $T$ are $t$-equal.

\para{A note on prefix and suffix thresholds.} Suppose that instead of $T$, we had a PDFA $T'$ over $\Sigma=\{a,b,c\}$ as follows: $T'$ is identical to $T$, except that from every state $q\in Q_T$ there is a $c$-transition with a very small probability $\varepsilon$ leading to a different state of an extremely large PDFA $L$. If $\varepsilon$ is very small, developing $L$ will be of little benefit for the approximation, but waste a lot of time and space for the extraction. However, if $\varepsilon_S,\varepsilon_P > \varepsilon$, then no prefix containing $c$ will ever be added to the table, and similarly no suffix containing $c$ will ever be considered a separating suffix (needlessly separating two prefixes). 
The existence of such transitions is quite possible in RNNs: they are unlikely to perfectly learn to represent $0$ even for tokens that have never been seen, and moreover never `tame' the states that would be reached from such transitions (as they are not seen in training).
\section{Implementation}\label{App:Implementation}

\para{Clustering the Prefixes} The initial clustering can be done with any clustering algorithm. In our implementation we use DBSCAN \cite{DBSCAN}, with $t$ as the noise tolerance and a minimum neighbourhood size $1$ for core points. When splitting a cluster into cliques, if its largest range across a single dimension is $n>1$ times the threshold $t$, it is split into $\ceil{n}$ clusters across that dimension.
In the determinism refinement, when splitting a cluster $c$, there may be some $p\in c$ for which $p\concat \sigma\notin P$. In this case a best match $c_\sigma$ for $\oracle_S(p\concat\sigma)$ is found by the heuristic given in section \ref{force-heuristics}, and $p$ is added to the respective new cluster. 

\renewcommand\thefigure{\thesection.\arabic{figure}}
\setcounter{figure}{0}  
\section{Synthetic Grammars} \label{App:smalls}

\subsection{Tomita Grammars} We adapt the Tomita grammars \cite{tomita82} for use as weighted models as follows: for each Tomita grammar and its minimal DFA $T$ we create a PDFA variant $T_W$ which has the same structure as $T$, and in which accepting/rejecting states are differentiated by their preference for \texttt{0} or \texttt{1}. Every state in $T_W$ has stopping probability $0.05$, the states $q$ have transition weights $0.7\cdot 0.95=0.665$ and $0.3\cdot 0.95=0.285$, such that $\delta_W(q,0)=0.665$ iff $q$ is an accepting state in $T$. We show all of the adaptations in \ref{fig:tomitas}, labelling the weighted variants \texttt{T1} through \texttt{T7} in the same order as their binary counterparts.
The images were generated using graphviz.

\begin{figure} 
  \centering
  [T1]{\includegraphics[scale=0.2]{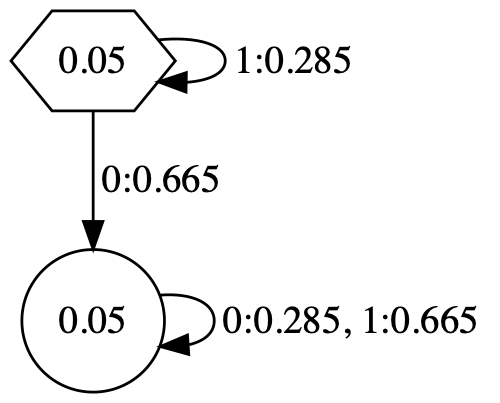}}
  [T2]{\includegraphics[scale=0.2]{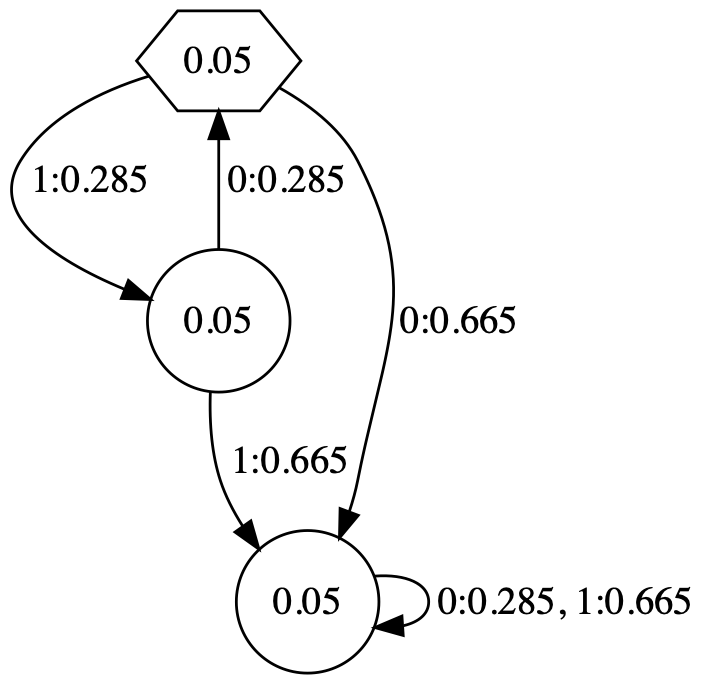}}
  [T3]{\includegraphics[scale=0.17]{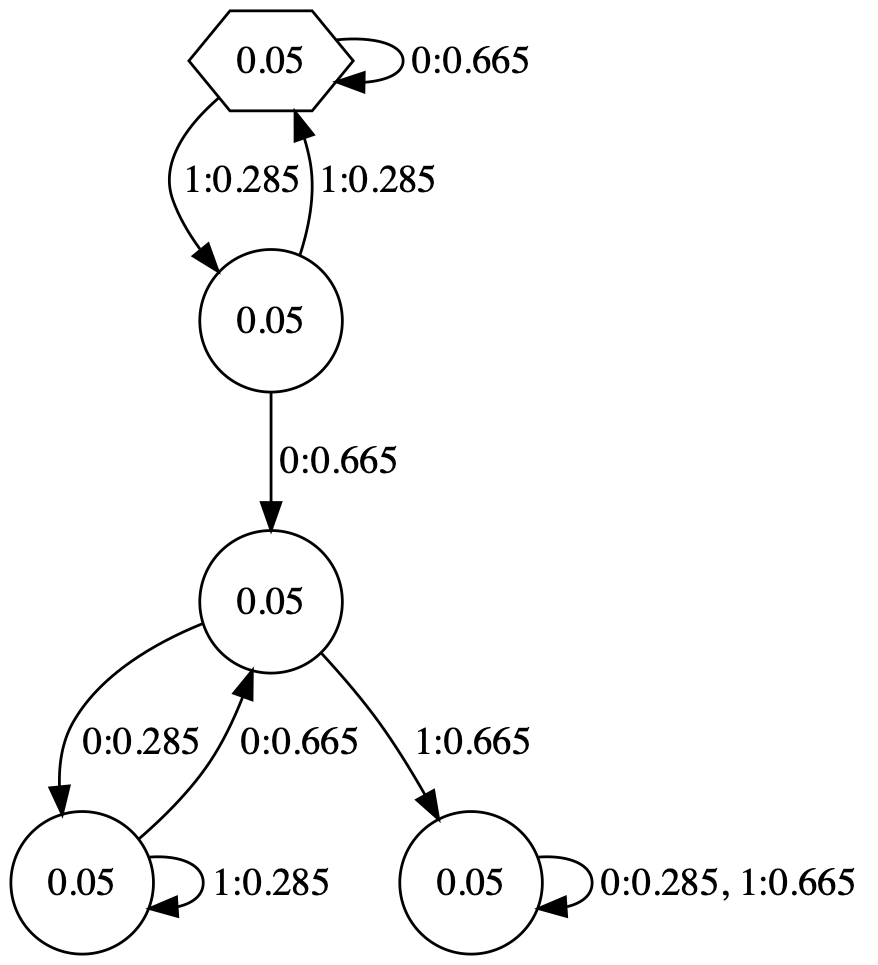}}
  [T4]{\includegraphics[scale=0.17]{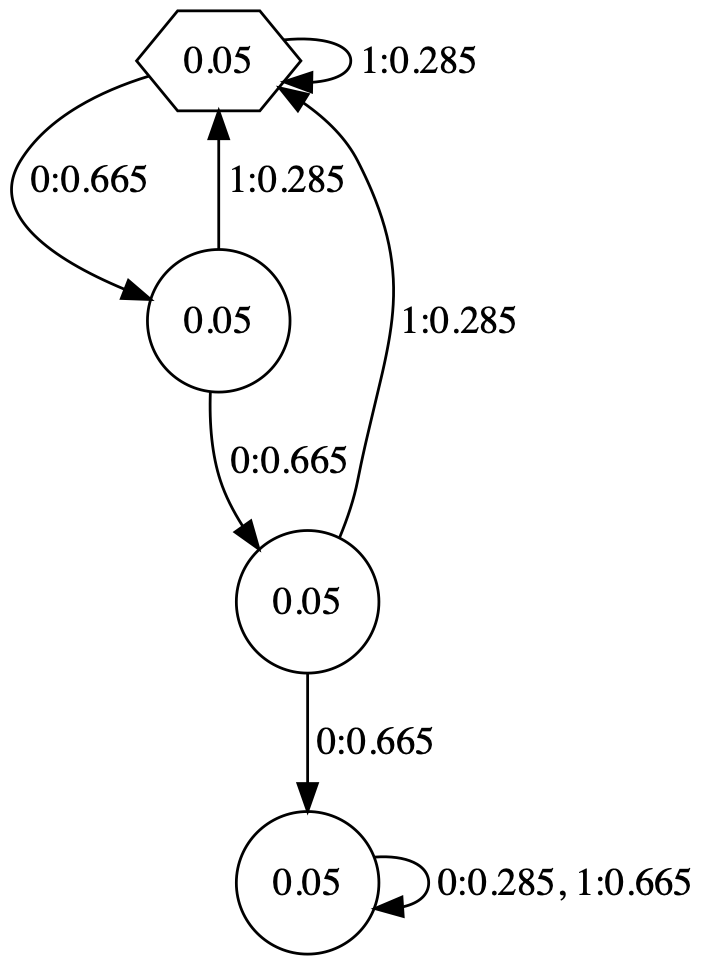}}
  [T5]{\includegraphics[scale=0.17]{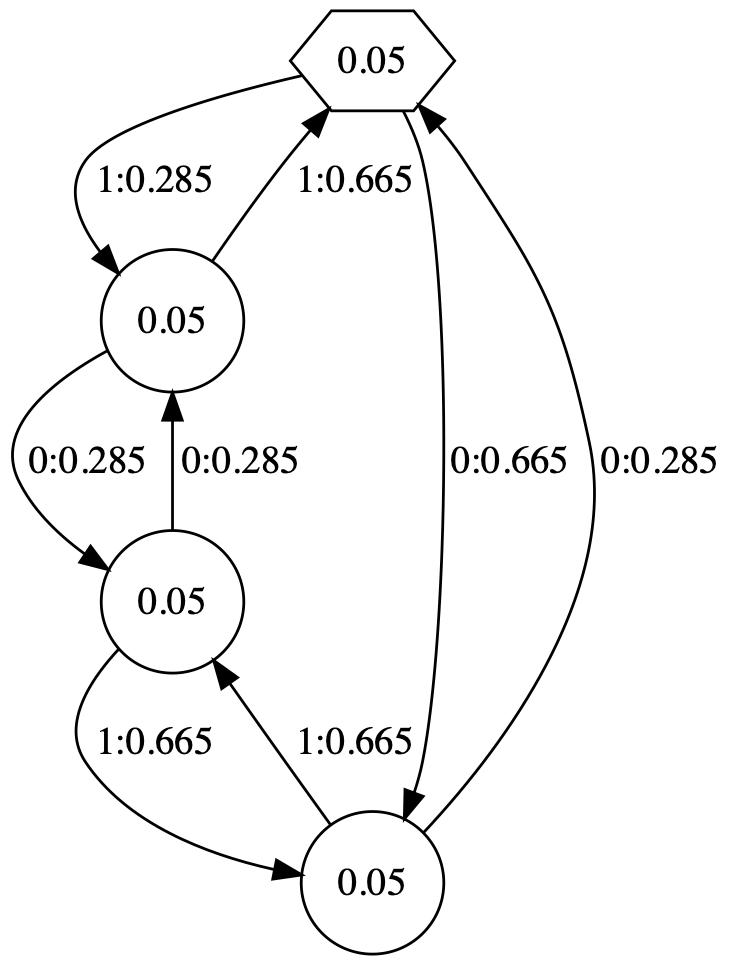}}
  [T6]{\includegraphics[scale=0.17]{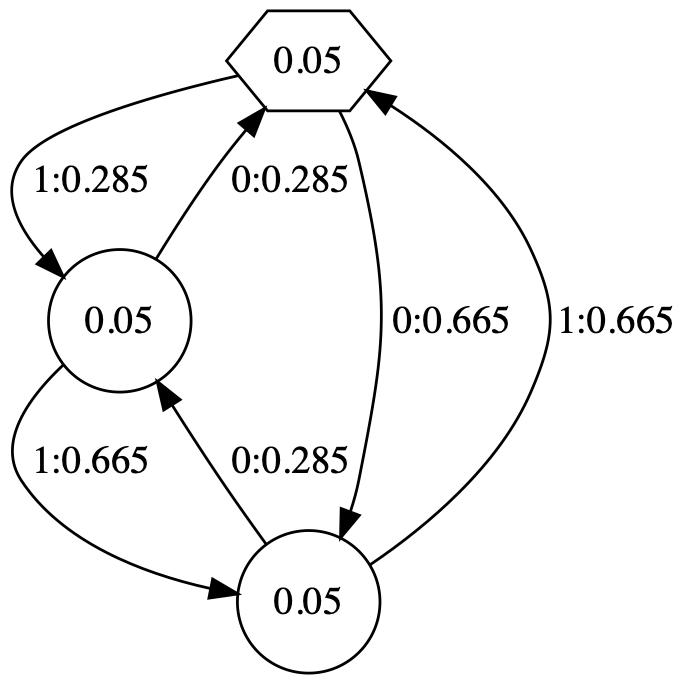}}
  [T7]{\includegraphics[scale=0.17]{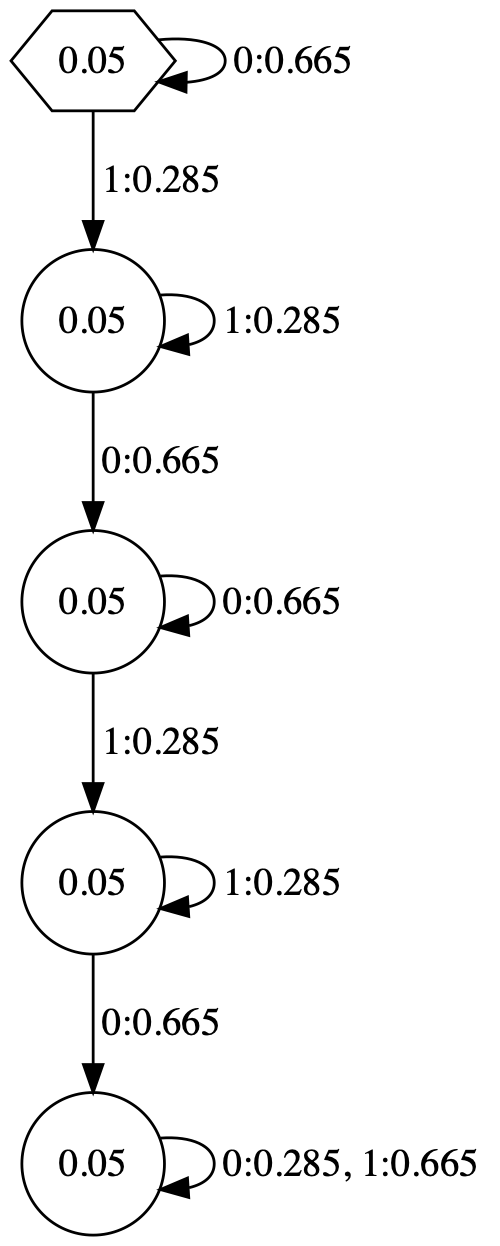}}
  \caption{Weighted variants of the Tomita grammars.}\label{fig:tomitas}
\end{figure}

We train $7$ RNNs on these grammars, their parameters and training routine are described in \ref{App:RNNs}. We extract from them with the same algorithms as for the {\spice} and {\uhl} languages. The extraction parameters and results are given in table \ref{tab:tomita}.

From each of the {\tomita} RNNs, our algorithm successfully reconstructs a PDFA with the exact same structure as the RNN's target PDFA, and transition weights within tolerance of the corresponding weights in the target. The extracted PDFAs for each {\tomita} RNN are presented in \ref{fig:extracted_tomitas}.

%checking ndcg with params: num samples: 2000, max sample len: 200, ndcg k: 2
%checking wer with params: num samples: 2000, sample cutoff len: 100
\begin{table}
	\centering

\small
\begin{tabular}{c l|| r r r c c}
	Language ($|\Sigma|,\ell$)& Model & WER$\downarrow$ & NDCG$\uparrow$ & Time (s) & WER Size & NDCG Size \\
	\hline

\hline
Tomita 1 ($ 2$, $ 0.77 $) & W\lstar & \textbf{0.0} & \textbf{1.0} & 55 &  2 \cexs{0} & 2 \cexs{0}  \\
 & Spectral & \textbf{0.0} & \textbf{1.0} & 18 &  k=10 & k=10  \\
 & N-Gram & 0.0001 & 0.9998 & 27 & 63 (n=6) & 31 (n=5) \\
 & ALERGIA\ignore{ nsamples/heuristic/statecount, wer: (50000, 'alergia', 50) ndcg: (50000, 'alergia', 50) }  & \textbf{0.0} & \textbf{1.0} & 28 &  8 & 8  \\
\hline
Tomita 2 ($ 2$, $ 0.78 $) & W\lstar & \textbf{0.0} & \textbf{1.0} & 55 &  3 \cexs{1}  & 3 \cexs{1}   \\
 & Spectral & \textbf{0.0} & \textbf{1.0} & 13 &  k=10 & k=10  \\
 & N-Gram & 0.0 & \textbf{1.0} & 27 & 63 (n=6) & 15 (n=4) \\
 & ALERGIA\ignore{ nsamples/heuristic/statecount, wer: (50000, 'alergia', 50) ndcg: (50000, 'alergia', 50) }  & \textbf{0.0} & \textbf{1.0} & 28 &  6 & 6  \\
\hline
Tomita 3 ($ 2$, $ 0.78 $) & W\lstar & \textbf{0.0} & \textbf{1.0} & 62 &  5 \cexs{2} & 5 \cexs{2}  \\
 & Spectral & 0.0071 & 0.9945 & 13 & k=7 & k=13 \\
 & N-Gram & 0.0542 & 0.9918 & 27 &  63 (n=6) & 63 (n=6)  \\
 & ALERGIA\ignore{ nsamples/heuristic/statecount, wer: (50000, 'alergia', 50) ndcg: (50000, 'alergia', 50) }  & 0.0318 & 0.9963 & 28 &  8 & 8  \\
\hline
Tomita 4 ($ 2$, $ 0.79 $) & W\lstar & \textbf{0.0} & \textbf{1.0} & 56 &  4 \cexs{1} & 4 \cexs{1}  \\
 & Spectral & \textbf{0.0} & \textbf{1.0} & 13 & k=14 & k=12 \\
 & N-Gram & 0.073 & 0.9887 & 27 &  63 (n=6) & 63 (n=6)  \\
 & ALERGIA\ignore{ nsamples/heuristic/statecount, wer: (50000, 'alergia', 50) ndcg: (50000, 'alergia', 50) }  & \textbf{0.0} & \textbf{1.0} & 28 &  9 & 9  \\
\hline
Tomita 5 ($ 2$, $ 0.79 $) & W\lstar & \textbf{0.0} & \textbf{1.0} & 56 &  4 \cexs{2} & 4 \cexs{2}  \\
 & Spectral & 0.0001 & \textbf{1.0} & 11 & k=67 & k=23 \\
 & N-Gram & 0.1578 & 0.9755 & 27 &  63 (n=6) & 63 (n=6)  \\
 & ALERGIA\ignore{ nsamples/heuristic/statecount, wer: (50000, 'alergia', 50) ndcg: (50000, 'alergia', 50) }  & 0.0315 & 0.991 & 29 &  15 & 15  \\
\hline
Tomita 6 ($ 2$, $ 0.78 $) & W\lstar & \textbf{0.0} & \textbf{1.0} & 56 &  3 \cexs{1} & 3 \cexs{1}  \\
 & Spectral & 0.0003 & 0.9999 & 23 &  k=36 & k=36  \\
 & N-Gram & 0.1645 & 0.9695 & 27 &  63 (n=6) & 63 (n=6)  \\
 & ALERGIA\ignore{ nsamples/heuristic/statecount, wer: (50000, 'alergia', 50) ndcg: (50000, 'alergia', 50) }  & 0.0448 & 0.9983 & 28 &  12 & 12  \\
\hline
Tomita 7 ($ 2$, $ 0.78 $) & W\lstar & \textbf{0.0} & \textbf{1.0} & 63 &  5 \cexs{1} & 5 \cexs{1}  \\
 & Spectral & 0.0003 & 0.9999 & 13 & k=32 & k=37 \\
 & N-Gram & 0.0771 & 0.9857 & 27 &  63 (n=6) & 63 (n=6)  \\
 & ALERGIA\ignore{ nsamples/heuristic/statecount, wer: (50000, 'alergia', 50) ndcg: (50000, 'alergia', 50) }  & 0.0363 & 0.9936 & 28 &  11 & 11  \\
\end{tabular}
	\caption{\tomita ~results. Each language is listed with its alphabet size $|\Sigma|$ and RNN test loss $\ell$. The $n$-grams and sample-based PDFAs were created from 50,000  samples, 			and shared samples. \flexfringe was run with state\textunderscore count =  50 . 			 Our algorithm was run with $t{=}0.1,\varepsilon_P,\varepsilon_S{=}0,|P|{\leq}5000$ and $|S|{\leq}100$,		 and spectral with $|P|,|S|{=} 100 $.}
	\label{tab:tomita}
\end{table}

\begin{figure} 
  \centering
  [ET1]{\includegraphics[scale=0.37]{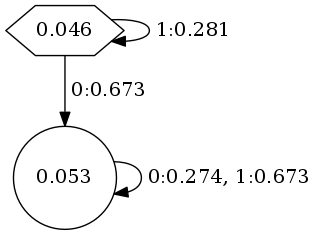}}
  [ET2]{\includegraphics[scale=0.37]{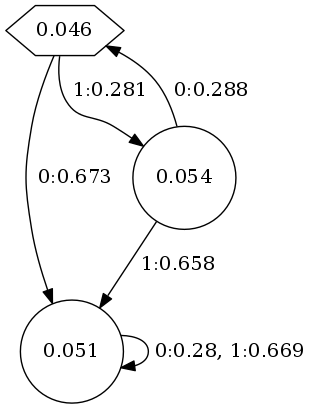}}
  [ET3]{\includegraphics[scale=0.28]{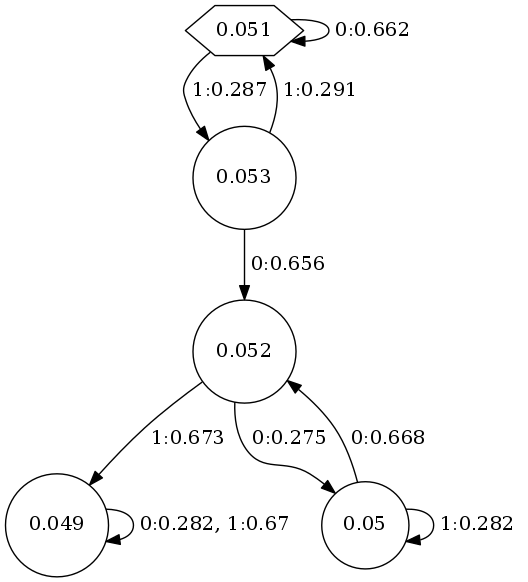}}
  [ET4]{\includegraphics[scale=0.3]{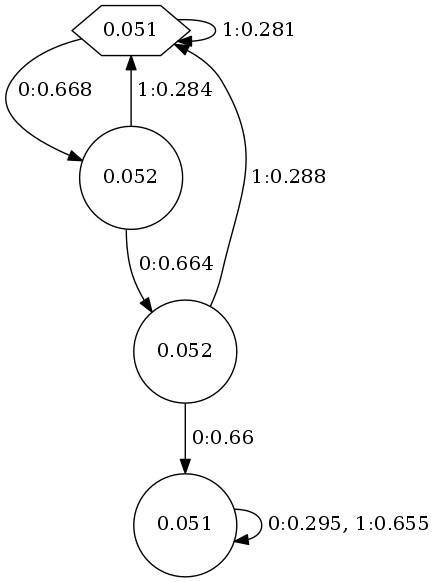}}
  [ET5]{\includegraphics[scale=0.23]{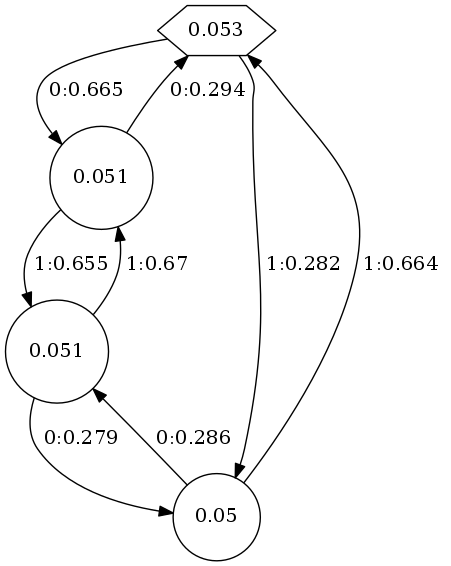}}
  [ET6]{\includegraphics[scale=0.27]{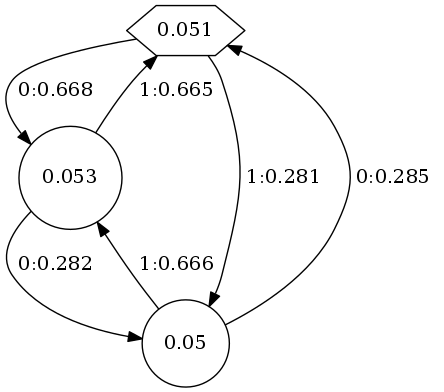}}
  [ET7]{\includegraphics[scale=0.25]{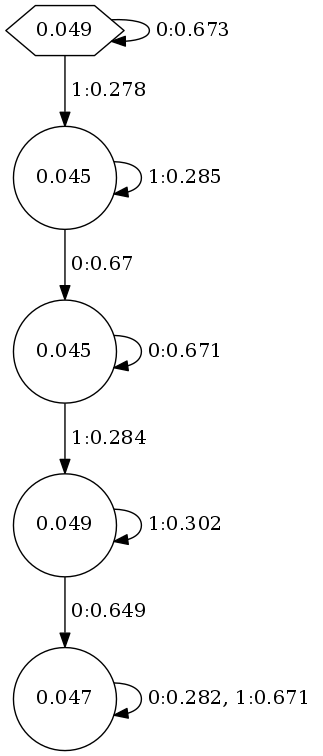}}
  \caption{PDFAs extracted using W{\lstar} from the RNNs trained on weighted variants of the Tomita grammars.}\label{fig:extracted_tomitas}
\end{figure}

\subsection{Unbounded History Languages}
The \uhl s are 3 cyclic PDFAs, shown in \ref{fig:uhls}. {\uhl} 3 is a weighted adaptation of {\tomita} 5, where the difference in probabilities between the states is lower than in our original adaptations. This makes it harder for the $n$-gram to guess the current state from local clues in its window (such as many appearances of one token over another). Precisely:

\begin{itemize}
    \item[] {\uhl }1 is a 9-state cycle PDFA over $\Sigma=\{\texttt{0},\texttt{1}\}$ that loops through all of its states one at a time, regardless of the actual input token. On all states it has stopping probability $0.05$, and divides the remaining next-token distribution over \texttt{0} and \texttt{1} as follows: on all states \texttt{0} has next-token probability $0.75$ and \texttt{1} has $0.15$, except for the second, fifth, and ninth states, where this is reversed.
    \item[] \uhl 2 is a 5-state cycle PDFA over $\Sigma=\{\texttt{0,1,2,3,4}\}$, that loops through all of its states one at a time regardless of input token. At every state it has stopping probability $0.045$, and it gives next-token probability $0.591$ to a different token at each state, with the rest of the tokens getting a uniform distribution between themselves.
    \item[] \uhl 3 is a 4-state PDFA over $\Sigma=\{\texttt{0,1}\}$ that maintains the parity of the seen \texttt{0} and \texttt{1} tokens. Every state has stopping probability $0.05$, and most states give \texttt{0} next-token probability $0.525$ and \texttt{1} next-token probability $0.425$, except for the state where the number of seen \texttt{0}s and \texttt{1}s is odd, where this is reversed. 
\end{itemize} 

\uhl 3 is an adaptation of the fifth Tomita grammar similar to our other presented adaptations, except that here the next-token probabilities of \texttt{1} and \texttt{0} are closer to each other, making it slightly harder to infer which states the PDFA has been in from a finite history\footnote{This recalls the insight of \cite{sharan16}, who note that unexpected tokens are useful as they convey information about the current state of the model.}

Applied with variation tolerance $t=0.1$, our algorithm managed to reconstruct every \uhl s structure from its trained RNN perfectly, with weights within $t$ of the original\footnote{(When extracting from RNNs, the weights of course can only be as good as those learned by the RNNs)}. The reconstructed PDFAs are shown in \ref{fig:extracted_uhls}.

\begin{figure} 
  \centering
  \begin{tabular}[c]{cc}
  \begin{tabular}[c]{c}
  [ \uhl 1]{
  \includegraphics[scale=0.15]{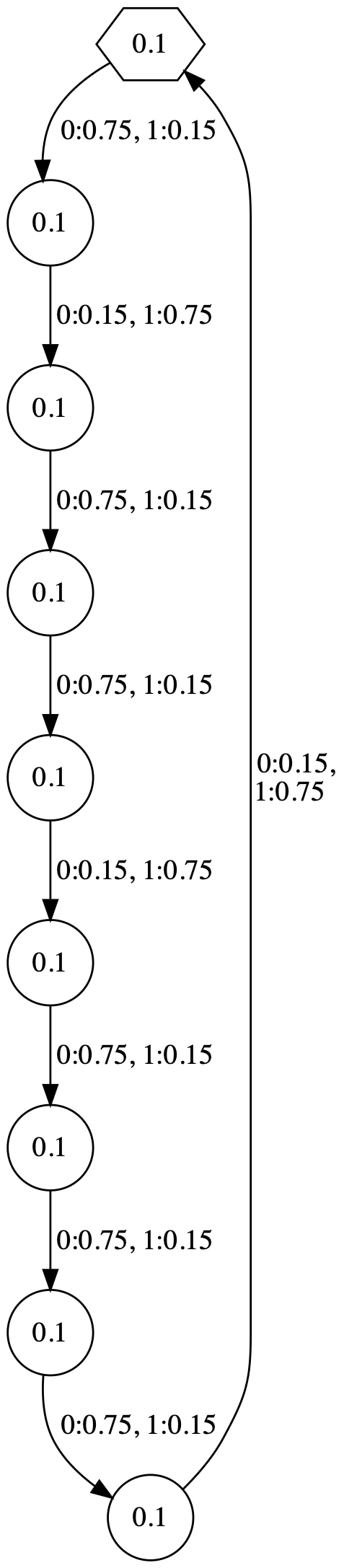}} 
  \end{tabular}
  &
  \begin{tabular}[c]{c}
  	\begin{tabular}[c]{c}
  [ \uhl 2]{\includegraphics[scale=0.15]{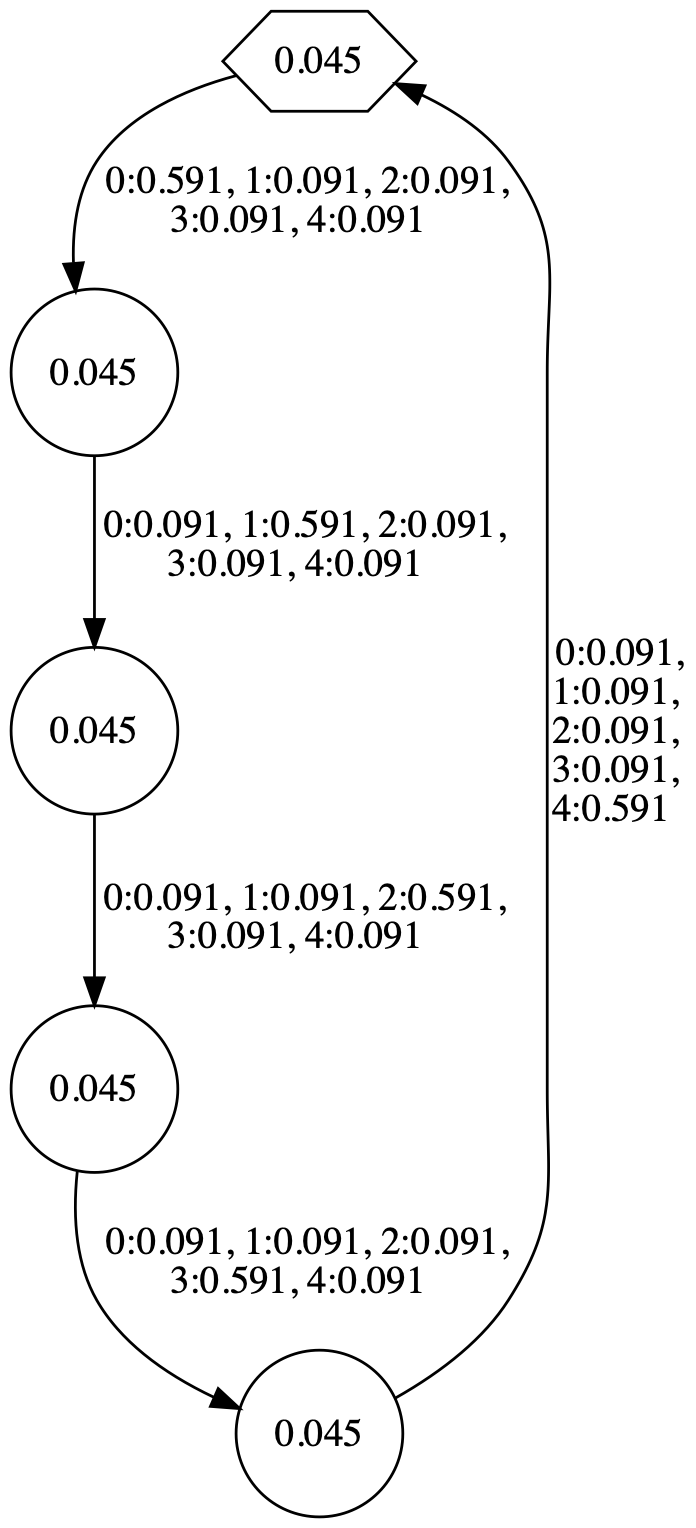}} 
  \end{tabular}  \\ 
   \begin{tabular}[c]{c}
  [ \uhl 3]{\includegraphics[scale=0.2]{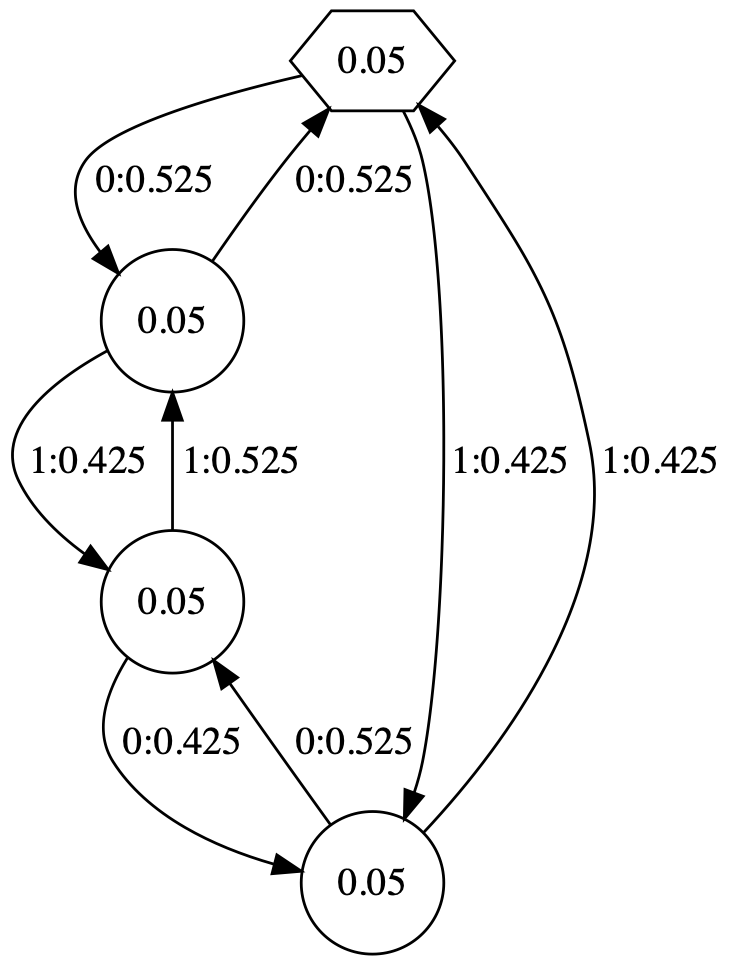}}
   \end{tabular}
\end{tabular}
  \end{tabular}
  \caption{The {\uhl} PDFAs.}\label{fig:uhls}
\end{figure}

\begin{figure} 
  \centering
  \begin{tabular}[c]{cc}
  \begin{tabular}[c]{c}
  [E\uhl 1]{
  \includegraphics[scale=0.25]{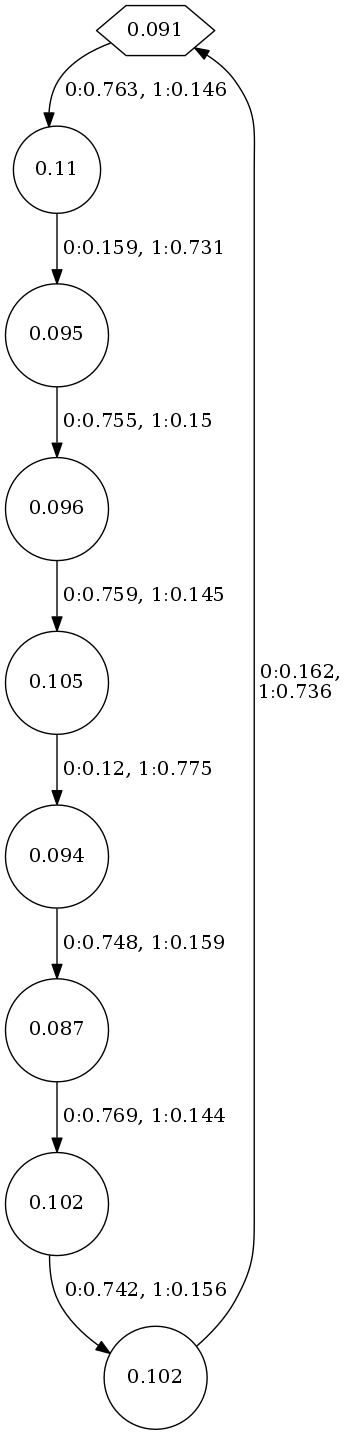}} 
  \end{tabular}
  &
  \begin{tabular}[c]{c}
\begin{tabular}[c]{c}
  [E\uhl 2]{\includegraphics[scale=0.22]{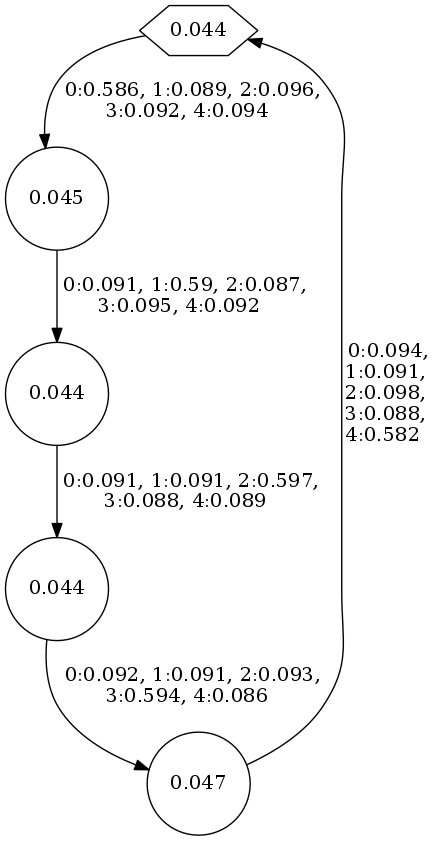}} 
     \end{tabular} \\
  \begin{tabular}[c]{c}
 [E\uhl 3]{\includegraphics[scale=0.25]{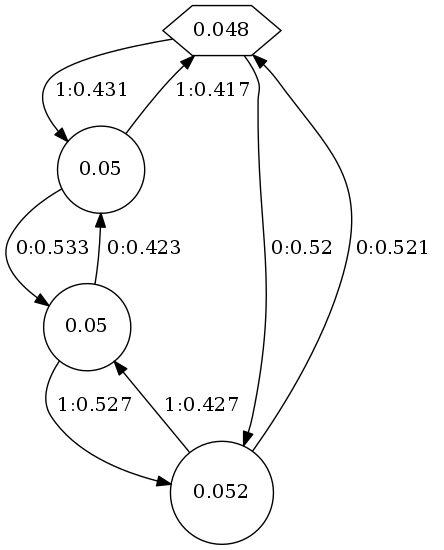}} 
    \end{tabular}
   \end{tabular}
  \end{tabular}
  \caption{The {\uhl} PDFAs, as reconstructed by W\lstar from RNNs trained on the original \uhl s.}\label{fig:extracted_uhls}
\end{figure}

\section{RNNs}\label{App:RNNs}
All the RNNs are $2$-layer pytorch LSTMs with training dropout $0.5$ and linear transformation + softmax for the classification. The input token embeddings and initial hidden states were treated as parameters. 

The {\tomita} and {\uhl} RNNs had input (embedding) dimension $2$ and hidden dimension $50$, except for {\uhl} 2 which had input dimension $5$. The {\spice} RNNs had input/hidden dimensions (resp.) as follows: $0$. 4/50 $1$. 20/50 $2$. 10/50 $3$. 10/50 $4$. 33/100 $6$. 60/100 $7$. 20/50 $9$. 11/100 $10$. 10/20 $14$. 27/30 .

The RNNs were trained with the ADAM optimiser and varying learning rates, each training for $10$ full epochs for learning rate (or less if the validation loss stopped decreasing).
The {\spice} and {\uhl} RNNs used a cyclic learning rate, going through $8$ values from $0.01$ to $0.0001$ 2 and a half times.  
The {\tomita} RNNs simply used the learning rates $0.01, 0.008, 0.006, 0.004, 0.002, 0.001, 0.0005, 0.0001, 5e-05$ once in order.

The {\spice} RNNs were trained with the train samples given by the spice competition~\cite{SPiCe}.
For the {\uhl} and {\tomita} RNNs, we generated train sets of size $10,000$ and $20,000$ respectively by sampling from the target PDFAs according to their distributions.
For each RNN, we split its given train set into train, validation, and test sets, taking respectively $90\%/5\%/5\%$ of the original set.
We checked each RNN's validation loss after every epoch. Whenever it worsened for 2 consecutive epochs, we reverted to the previous best RNN (by validation loss) and moved to the next learning rate.

For each RNN, in each training epoch we randomly split the train set into batches of equal size (up to the last `leftover' batch), and trained in these batches.
For the {\uhl} and {\tomita} RNNs we trained with batch size $500$ and for the {\spice} RNNs we used $1,000$.

\end{document}